\theoremstyle{plain}
\newtheorem{theorem}{Theorem}[section]
\newtheorem{proposition}[theorem]{Proposition}
\newtheorem{corollary}[theorem]{Corollary}
\theoremstyle{definition}
\newtheorem{definition}[theorem]{Definition}
\theoremstyle{remark}
\newenvironment{talign}
 {\let\displaystyle\textstyle\align}
 {\endalign}
\newenvironment{talign*}
 {\let\displaystyle\textstyle\csname align*\endcsname}
 {\endalign}
\newcommand{\lelc}{\le_{\mathrm{lc}}}
\newcommand{\gelc}{\ge_{\mathrm{lc}}}
\icmltitlerunning{Why the Rich Get Richer? On the Balancedness of Random Partition Models}
\pgfplotsset{compat=1.17}
\begin{document}

\twocolumn[
\icmltitle{Why the Rich Get Richer? \\
On the Balancedness of Random Partition Models}




\icmlsetsymbol{equal}{*}

\begin{icmlauthorlist}
\icmlauthor{Changwoo J. Lee}{tamu}
\icmlauthor{Huiyan Sang}{tamu}
\end{icmlauthorlist}

\icmlaffiliation{tamu}{Department of Statistics, Texas A\&M University, Texas, USA}

\icmlcorrespondingauthor{Changwoo J. Lee}{c.lee@stat.tamu.edu}

\icmlkeywords{Bayesian nonparametrics, Random partition models, Clustering, Balancedness, Entity resolution}

\vskip 0.3in
]



\printAffiliationsAndNotice{}  

\begin{abstract}
Random partition models are widely used in Bayesian methods for various clustering tasks, such as mixture models, topic models, and community detection problems. 
While the number of clusters induced by random partition models has been studied extensively, another important model property regarding the balancedness of partition has been largely neglected.
We formulate a framework to define and theoretically study the balancedness of exchangeable random partition models, by analyzing how a model assigns probabilities to partitions with different levels of balancedness. 
We demonstrate that the ``rich-get-richer'' characteristic of many existing popular random partition models is an inevitable consequence of two common assumptions: product-form exchangeability and projectivity.
We propose a principled way to compare the balancedness of random partition models, which gives a better understanding of what model works better and what doesn't for different applications.
We also introduce the ``rich-get-poorer'' random partition models and illustrate their application to entity resolution tasks.
\end{abstract}

\section{Introduction}
\label{sec:intro}

Bayesian formulation of clustering problems has gained considerable attention, especially when the number of clusters is unknown. By assigning a prior distribution on the partition space, Bayesian clustering methods give not only flexibility in the number of clusters but also an uncertainty quantification of latent clustering structure. 
Popular examples include random partition models induced by discrete random probability measures, such as Dirichlet process \citep[DP;][]{ferguson1973bayesian}, Pitman-Yor process \citep[PYP;][]{perman1992size,pitman1997two}, and the mixture of finite mixtures \citep[MFM;][]{miller2018mixture}, to name a few. 
These random partition models reflect the prior belief on the clusters' characteristics, such as how the number of clusters $K^+_n$ grows as the number of datapoints $n$ increases, and how datapoints are distributed among clusters. 
For example, it is well known that $K^+_n\asymp\log n$ under the Chinese restaurant process (CRP), the distribution of partition induced by DP. 

While the behavior of the number of clusters $K^+_n$ under various random partition models has been studied extensively \citep{gnedin2006exchangeable,lijoi2007controlling,favaro2009bayesian,de2015gibbs,camerlenghi2018bayesian,lijoi2020pitman,fruhwirth2021generalized}, the  \textit{balancedness} of random partition (i.e. probabilities of partitions with different levels of balancedness) has been largely uninvestigated. 
Nevertheless, there has been awareness of the crucial effect of the balancedness of random partitions on cluster inference. The most prominent example is the CRP prior that is known to yield the posterior estimate with few big and many small extra clusters \citep{petrone1997note}, intuitively due to the prediction rule of CRP that assumes a new datapoint falls into an existing cluster with probability proportional to its size. 
This ``rich-get-richer'' characteristic of preferring unbalanced partitions is shared across many other random partition models, such as those induced by PYP, MFM, and Dirichlet-multinomial allocation~\citep{green2001modelling}, yet there is a less theoretical understanding of when and why this property arises. 

There are many important questions surrounding the balancedness of random partition models that are not well studied in the literature. 
First, how to \textit{define} a rigorous theoretical framework that is applicable to systematically studying the balancedness properties for a broad class of random partition models? 
The answer to this question would be crucial to understand the root causes of the ``rich-get-richer'' property: the product-form exchangeability and projectivity assumptions which we shall demonstrate later. 
Also, it provides deeper insight into the suitable application scenario of each model. The success of latent Dirichlet allocation \citep[LDA;][]{blei2003latent} in topic modeling is partially attributed to the fact that per-document topic proportions are inherently unbalanced, and hence unbalanced random partition models are naturally suitable for topic modeling. 

Second, how to \textit{compare} and \textit{quantify} the balancedness of different random partition models?  
The answer to this question helps to understand the cost and benefits of choosing one model over the other. 
For example, many practitioners choose DP-based models since it does not require the number of components to be fixed in advance, and indeed many methods incorporating DP emphasize this as a key feature \citep{teh2006hierarchical, kemp2006learning}. 
However, this benefit comes with a cost: compared to the Dirichlet-multinomial allocation model, the random partition induced by DP is more unbalanced, which explains why the DP mixture model typically yields too many tiny clusters compared to the finite mixture or MFM  \citep{green2001modelling,miller2018mixture}. 
Also, community detection results based on the Bayesian stochastic blockmodel~\citep{nowicki2001estimation,geng2019probabilistic} are reported to be sensitive to the choice of the latent random partition model~\citep{legramanti2022extended}, and this may be partly explained by the balancedness of random partition prior; see \cref{subsec:3.3} for the detailed discussion.  

Finally, beyond the sea of models that implicitly favor unbalanced partitions, are there flexible models that favor balancedness or have no preference on the balancedness? 
There have been some efforts to follow a sequential construction of random partition  and redesign prediction rules to change the ``rich-get-richer'' property \citep{jensen2008bayesian,wallach2010alternative,lu2018reducing,poux2021powered}, but these sequential construction methods (i.e. species sampling models) often no longer guarantee exchangeability of the resulting random partition model~\citep{lee2013defining}, a crucial  assumption that the distribution of random partition is invariant of the permutation of indices. 
Indeed, there are many application scenarios for which such models are more suitable. 
In this paper, we illustrate with an entity resolution (ER) task which aims to find clusters of records that refer to the same individual in a noisy database where we do not expect a small number of clusters dominates the whole. 

\textbf{Scope and Contributions}. First, we formulate the balancedness of exchangeable random partitions and study the related properties, which is crucial to check or build the model having the desired balancedness property.
Especially, we focus on Gibbs partition models \citep{gnedin2006exchangeable}, a very general class of exchangeable random partition models.
Second, we provide the complete family of balance-neutral random partitions that is both exchangeable and projective, which can serve as a noninformative prior in terms of balancedness. 
Third, we define \textit{B-sequence} to compare the strength of balancedness between Gibbs partitions and offer its intuitive explanation. 
Finally, we provide an in-depth example of a balance-seeking random partition model and demonstrate its application to the ER task.

\section{Backgrounds}

\subsection{Notations}

Let $[n]=\{1, \ldots, n\}$ denote the index set of datapoints. A partition of $[n]$ is a collection of unordered nonempty subsets (clusters) $\Pi_{n}\!=\!\left\{S_{1}, \ldots, S_{k}\right\}$ with $\cup_j S_{j}=[n]$ and $S_{j} \cap S_{\ell}\!=\!\emptyset$ for $j \neq \ell$. 
Let $\mathcal{P}_{[n]}^{k}$ denote the set of partitions of $[n]$ into $k$ blocks, and $\mathcal{P}_{[n]}\!=\!\cup_{k=1}^{n} \mathcal{P}_{[n]}^{k}$ be the set of all partitions~of~$[n]$.   
We use notations $n_{j}=\left|S_{j}\right|$ and $\boldsymbol{n}=\left(n_{1}, \ldots, n_{k}\right)$ to denote the cluster sizes and $k=\left|\Pi_n\right|$ to denote the number of clusters in partition $\Pi_n$. 
Let $z_i\in\{1,\dots,k\}$ be a cluster membership of $i$-th datapoint and $\mathbf{z} = (z_1,\ldots,z_n)$ be a cluster membership vector.
Further, let $\mathcal{I}^k_n$ be a collection of integer partition of $n$ into $k$ parts, a nonincreasing $k-$tuple of positive integers $\bm{n} = (n_1,\ldots, n_k)$ whose sum is $n$. 
Similarly, define $\mathcal{I}_n = \cup_{k=1}^n\mathcal{I}_n^k$. 
For a single $\bm{n}\in \mathcal{I}^k_n$, there are $n!/(\prod_{i=1}^n (i!)^{m_i}m_i!)$ number of partitions in $\mathcal{P}_{[n]}^k$ whose nonincreasing cluster sizes correspond to $\bm{n}$, where $m_i=\sum_{j=1}^k1(n_j = i)$ is the number of clusters of size $i$.

\subsection{Comparison of Balancedness Between Partitions}

Next we consider how to compare the balancedness of two \textit{fixed} partitions with cluster sizes $\bm{n}=(n_1,\ldots,n_k)$ and $\bm{n}'=(n_1',\ldots,n_k')$. 
We restrict our attention to the case when both partitions have $k$ clusters.
A diversity index \citep{hill1973diversity} is a measure to quantify the evenness of species distribution widely used in the ecology literature. To compare the balancedness of two partitions using the diversity index, we treat the proportion of cluster sizes $\{n_j/n\}_{j=1}^k$ as species distribution.
Popular examples are the Shannon index (with base $e$) and the Gini-Simpson index:
\begin{talign}
    H(\bm{n}) &:=  -\sum_{j=1}^k(n_j/n)\log (n_j/n) \quad \text{(Shannon)}\\
    G(\bm{n}) &:= 1-\sum_{j=1}^k(n_j/n)^2
    \quad \text{(Gini-Simpson)}
\end{talign}
Other examples include the R\'enyi index and the Hill numbers \citep{daly2018ecological}.\footnote{We refrain from using the term \textit{entropy} to avoid confusion since we don't consider $\bm{n}$ as a random variable in this context.} 
A higher diversity index indicates a partition is more balanced (has clusters with similar sizes), and the index is maximized when all $n_j/n$ are equal. 

However, different diversity indices do not always give a consistent ordering \citep{hurlbert1971nonconcept}, and they may not provide meaningful comparisons for some pairs of partitions. For instance, consider partitions of $[10]$ with cluster sizes $\bm{n} = (6,2,2)$ and $\bm{n}' = (5,4,1)$. 
It is not clear which one is more balanced than the other because both can be obtained by moving one data from a larger cluster of the partition $(6,3,1)$ to a smaller cluster, and diversity indices are inconsistent: $H(\bm{n}) \approx 0.95 > 0.94 \approx H(\bm{n}')$ but $G(\bm{n}) \approx 0.56 < 0.58 \approx G(\bm{n}')$. 
Instead, without referring to indices, \citet{patil1982diversity} proposed  intrinsic diversity ordering, a partial order relation to compare diversity between two species distribution.
When it comes to the discrete space $\mathcal{I}^k_n$, intrinsic diversity ordering now corresponds to the reverse dominance order.  

\begin{definition}[Reverse dominance order] Let $\bm{n} = (n_1,\ldots,n_k)$ and $\bm{n}' = (n_1',\ldots,n_k')$ be two integer partitions of $n$ into $k$ parts. By \citet{brylawski1973lattice}, the following statements are equivalent definitions of partial order $\prec$ on $\mathcal{I}_{n}^k$ where $\bm{n}\prec \bm{n}'$ represents $\bm{n}'$ is more balanced than $\bm{n}$:
\begin{enumerate}[label=(\alph*)]
    \item (reverse dominance ordering) Write $\bm{n} \prec \bm{n}'$ if $\sum_{j=1}^J n_j \ge \sum_{j=1}^J n_j'$ for $J=1,\dots,k$ and $\bm{n}\neq \bm{n}'$.
    \item (one-step downshift) Write $\bm{n} \prec \bm{n}'$ if $\bm{n}$ leads to $\bm{n}'$ by a finite sequence of one-step downshifts:
    $
    (n_1, \dots, n_k) \mapsto (n_1,\dots,n_u-1,\dots, n_v + 1, \dots, n_k)
    $
    provided that $n_u-1 \ge n_v+1$, where $1\le u <v\le k$. 
    \item (covering relation) $\prec$ is induced by the following covering relation:  $\bm{n'}$ covers $\bm{n}$ (i.e. no $\bm{n}''$ exist s.t. $\bm{n}\prec \bm{n}''\prec \bm{n}'$) if $n_u'=n_u-1, n_v' = n_v+1$ for some $u<v$, $n_j=n_j'$ for all $j\in \{1,\ldots,k\}\backslash\{u,v\}$, and either $(*)$ $v = u+1 $ or $(**)$ $n_u' = n_v'$, or both.

\end{enumerate}
\label{def:order}
\end{definition}
See \cref{fig:order} for an example of $\prec$. 
Although \cref{def:order} can be extended to the set $\mathcal{I}_n=\cup_{k=1}^n \mathcal{I}_n^k$, 
we shall only focus on the ordering between integer partitions with the same $k$ to study the balancedness of random partition models.

\subsection{Gibbs Partitions}
\label{subsec:2.3}
Random partition of $[n]$ is a discrete probability distribution on the space of partitions $\mathcal{P}_{[n]}$. There are two common assumptions on random partition models: \textit{finite exchangeability} and \textit{projectivity}. A random partition $\Pi_n$ on $[n]$ is called \textit{finitely exchangeable} if probabilities remain unchanged under any permutation of the indices of datapoints, which is a natural assumption unless one wants to take account of data dependence structures.
Next, a sequence of random partitions $(\Pi_n)_{n=1}^\infty$ satisfies \textit{projectivity} if the restriction of $\Pi_n$ to $[m]$ is $\Pi_m$ for any $m < n$, which is also called the addition rule, self-consistency, or marginal invariance property. We call $(\Pi_n)$ \textit{infinitely exchangeable} if $\Pi_n$ is finitely exchangeable for each $n$ and satisfies projectivity.

\begin{figure}[t]
\centering
\begin{tikzpicture}[scale=9, every node/.style={scale=0.8}]
 \foreach \y in {0.6,0.7,0.8,0.9,1, 1.1}{
  \draw [help lines, color=gray!50, dashed] (-0.25,\y) -- (0.25,\y);
  }
\foreach \y in {0.6,0.7,0.8,0.9,1.0, 1.1}
 \node[anchor=east] at (-0.25,\y) {\y};
\draw[->,thick] (-0.25,0.6)--(-0.25,1.15) node[above]{$H$ (Shannon)};
  \node (433) at (0,1.089) {\small $(4,3,3)$};
  \node (442) at (0.15,1.055) {\small $(4,4,2)$};
  \node (532) at (-0.15,1.03) {\small $(5,3,2)$};
  \node (541) at (0.2,0.943) {\small $(5,4,1)$};
  \node (631) at (0.05,0.898) {\small $(6,3,1)$};
  \node (622) at (-0.15,0.950) {\small $(6,2,2)$};
  \node (721) at (0,0.802) {\small $(7,2,1)$};
  \node (811) at (0,0.639) {\small $(8,1,1)$};
 \draw[ultra thick, red] (631) edge node[sloped]{} (622);
 \draw[ultra thick] (811) edge node[]{} (721);
 \draw[ultra thick] (721) edge node[right]{} (631);
 \draw[ultra thick] (622) edge node[right]{} (532);
 \draw[ultra thick] (532) edge node{} (442);
 \draw[ultra thick, blue] (442) edge node{} (433);
 \draw[ultra thick] (631) edge node{} (541);
 \draw[ultra thick] (541) edge node{} (532);
 \draw (721) -- (622);
 \draw (532) -- (433);
 \draw (631) -- (532);
 \draw (541) -- (442);
 \end{tikzpicture}

\caption{Diagram representing the partial order $\prec$ on $\mathcal{I}_{10}^3$. For example, $(8,1,1)\prec (7,2,1)$ which indicates $(7,2,1)$ is more balanced than $(8,1,1)$. The vertical position corresponds to the Shannon index $H$. Edges between two partitions represent the upper can be reached from the lower by a one-step downshift. Thick edges represent the covering relation  in \cref{def:order}(c). Especially, colored thick edges corresponds to $(**)$-covering with $n_u'=n_v'=2$ (red) and $n_u'=n_v'=3$ (blue); see \cref{subsec:3.3} and \cref{fig:balanceindexinterpret} for the detailed discussion on colored edges.}
\label{fig:order}
\end{figure}
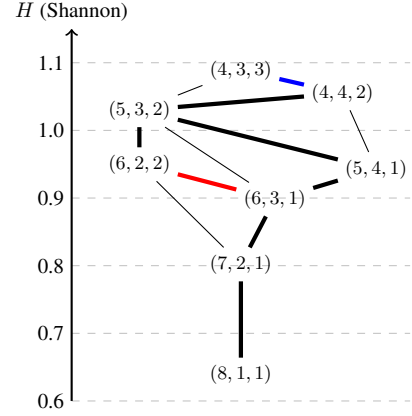

In this paper, we always assume finite exchangeability but not necessarily projectivity, as advocated in \citet{welling2006flexible}. Under the finite exchangeability, we can express the probability mass function of $\Pi_n$ using the \textit{exchangeable partition probability function} (EPPF) \citep{pitman1995exchangeable}. 
\begin{definition}[EPPF] A symmetric function $p^{(n)}$ is an EPPF of an exchangeable random partition $\Pi_n$ of $[n]$ if
\[
\mathbb{P}(\Pi_n = \{S_1,\ldots,S_k\}) = p^{(n)}(|S_1|, \ldots,|S_k|)
\]
\end{definition}

Since EPPF is symmetric of its arguments, the values of EPPF on integer partition $\bm{n} \in \mathcal{I}_n$ completely determine the probability mass function of random partition. 
Next, we introduce the \textit{Gibbs (or Gibbs-type) partition} \cite{pitman2006combinatorial},  a wide class of exchangeable random partition model.

\begin{definition}[Gibbs partition] Given two nonnegative sequences $\bm{V}=(V_{n,k})_{k\le n}$ and $\bm{W}=(W_s)_{s\ge 1}$ with $V_{1,1} = W_1 = 1$, we say $\Pi_n\sim \mathsf{Gibbs}_{[n]}(\bm{V},\bm{W})$ if its EPPF is
\begin{equation}
\label{eq:gibbspartition}
\textstyle
p^{(n)}(n_1,\ldots,n_k) = V_{n,k}\prod_{j=1}^k W_{n_j}
\end{equation}
\end{definition}
That is, the probability mass function has a product form that ensures finite exchangeability as well as mathematical tractability. While some authors define Gibbs partition with projectivity assumption (see \cref{prop:infinitegibbs}), our definition following \citet{pitman2006combinatorial} does not assume projectivity. The family of Gibbs partition is broad \citep[Fig.1]{lomeli2017marginal}, encompassing not only random partitions induced by DP, PYP, normalized generalized gamma process \citep{lijoi2007controlling}, and $\sigma$-stable Poisson-Kingman partitions \citep{pitman2003poisson}, but also those induced by parametric models such as Dirichlet-multinomial and MFM. 
To ensure Gibbs partition also satisfies projectivity (i.e., infinitely exchangeable), \citet{gnedin2006exchangeable,lijoi2007controlling} provided the following necessary and sufficient conditions.

\begin{proposition}
\label{prop1}
A $\mathsf{Gibbs}_{[n]}(\bm{V},\bm{W})$ partition for $n\in \mathbb{N}$, excluding partition of singletons and one-block partition, satisfies projectivity if and only if for some $\sigma \in [-\infty,1)$, the following two conditions are both satisfied:
\begin{enumerate}[label=(\roman*)]
    \item $W_{s} = \Gamma(s-\sigma)/\Gamma(1-\sigma)$,\quad $s=1,2,\ldots$ 
    \item $V_{n,k} = (n-\sigma k)V_{n+1,k} + V_{n+1,k+1}$, \quad $1\le k \le n$
\end{enumerate}
When $\sigma = -\infty$, replace $W_{s}\equiv1$ and $(n-\sigma k)$ by $k$ in (ii).
\label{prop:infinitegibbs}
\end{proposition}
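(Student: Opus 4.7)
The plan is to unpack projectivity into the classical addition rule on EPPFs, substitute the Gibbs product form, and then extract a functional equation that pins down $\bm{W}$ up to an affine parameter.

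First, I would observe that projectivity of $(\Pi_n)$ is equivalent to the addition rule at the level of the EPPF: for every $(n_1,\ldots,n_k)\in\mathcal{I}_n^k$,
\[
p^{(n)}(n_1,\ldots,n_k) \;=\; \sum_{j=1}^{k} p^{(n+1)}(n_1,\ldots,n_j+1,\ldots,n_k) \;+\; p^{(n+1)}(n_1,\ldots,n_k,1),
\]
obtained by conditioning on whether element $n+1$ joins an existing block or opens a new one. Plugging in \eqref{eq:gibbspartition}, using $W_1=1$, and dividing through by $V_{n+1,k}\prod_{j}W_{n_j}$ (which is legitimate on any partition that is neither all singletons nor a single block), the addition rule rearranges to
\[
\frac{V_{n,k}}{V_{n+1,k}} \;=\; \sum_{j=1}^{k} \frac{W_{n_j+1}}{W_{n_j}} \;+\; \frac{V_{n+1,k+1}}{V_{n+1,k}}.
\]
The left-hand side is a function of $(n,k)$ only, so the key constraint is that the sum $\sum_{j=1}^k W_{n_j+1}/W_{n_j}$ depends only on $(n,k)$, not on the individual $n_j$.

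The main step, and in my view the main obstacle, is showing that this constraint forces $f(s):=W_{s+1}/W_s$ to be affine. Given $(n,k)$ with $k\ge 2$ and $n\ge k+2$ (which is exactly where non-trivial intermediate partitions exist, hence the exclusion of singletons and one-block partitions), I compare the two integer partitions $(s,t,n_3,\ldots,n_k)$ and $(s-1,t+1,n_3,\ldots,n_k)$ whenever $s\ge t+2$. Invariance of the sum under this one-step downshift gives $f(s)+f(t)=f(s-1)+f(t+1)$, i.e.\ $f(s)-f(s-1)=f(t+1)-f(t)$. Varying $s$ and $t$ over the allowed range yields a constant first difference, so $f$ is affine. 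Writing $f(s)=a(s-\sigma)$ and absorbing the multiplicative gauge $a^{n-k}$ into $V_{n,k}$ (which preserves the Gibbs product form and is why the pair $(\bm V,\bm W)$ is only identifiable up to this rescaling), I may take $a=1$, giving $W_{s+1}/W_s=s-\sigma$. Telescoping with $W_1=1$ yields $W_s=\Gamma(s-\sigma)/\Gamma(1-\sigma)$, which is condition (i). The nonnegativity of $W_s$ forces $\sigma<1$; the degenerate limit $a\to0$, $a\sigma\to -1$ produces the $\sigma=-\infty$ case $W_s\equiv 1$.

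Having established (i), I substitute $\sum_{j=1}^{k}f(n_j)=n-k\sigma$ back into the displayed addition rule, obtaining
\[
V_{n,k} \;=\; (n-k\sigma)\,V_{n+1,k} + V_{n+1,k+1},
\]
which is condition (ii) (with the $\sigma=-\infty$ case replaced by $k$ as stated). For sufficiency I would simply reverse the calculation: assuming (i) and (ii), the identity $\sum_j W_{n_j+1}/W_{n_j}=n-k\sigma$ holds identically, and plugging in (ii) recovers the addition rule for every partition, so projectivity holds. The only remaining care is to verify that the derivation also covers the excluded extremal partitions (singletons and one-block), where the addition rule reduces to a single instance of (ii) with $k=n$ or $k=1$ and is thus automatic once (i)–(ii) are in force.
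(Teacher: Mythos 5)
Your argument is correct and is essentially the standard derivation of this result: the paper itself gives no proof of \cref{prop:infinitegibbs}, citing \citet{gnedin2006exchangeable} and \citet{lijoi2007controlling}, and your reconstruction (addition rule for the EPPF, the observation that $\sum_j W_{n_j+1}/W_{n_j}$ may depend only on $(n,k)$, the one-step-downshift comparison forcing $W_{s+1}/W_s$ to be affine, the gauge rescaling absorbed into $\bm V$, and the backward recursion for $V_{n,k}$) is exactly the argument in those references. The only point worth tightening is the division step, which implicitly assumes $W_{n_j}>0$ and $V_{n+1,k}>0$; this is where the exclusion of the two degenerate (all-singleton and one-block) partitions is actually used, as you correctly anticipate.
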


The most important subclass of infinitely exchangeable Gibbs partitions is the \textit{Ewens-Pitman two-parameter family}  \citep{ewens1972sampling,pitman2006combinatorial} with parameters $(\sigma, \theta)$ where 
\begin{equation}
\label{eq:twoparameter}
    V_{n,k} = \frac{\prod_{i=1}^{k-1}(\theta+i\sigma)}{(\theta+1)\cdots(\theta+n-1)}, \quad W_s=\frac{\Gamma(s-\sigma)}{\Gamma(1-\sigma)}
\end{equation}
with exception of $V_{n,1}=1/((\theta+1)\cdots(\theta+n-1))$ for $n\ge 2$ and $V_{1,1}=1$. The range of $(\sigma,\theta)$ is either (a) $\sigma\in[0,1)$, $\theta>-\sigma$ which corresponds to the random partition induced by DP and PYP, (b) $\sigma<0$, $\theta=K|\sigma|$ for $K\in\mathbb{N}$ which corresponds to Dirichlet-multinomial allocation model, or (c) $\sigma=-\infty$, replacing $W_s\equiv 1$ and $V_{n,k}=K(K-1)\cdots(K-k+1)/K^n$ for some $K\in\mathbb{N}$ leads to the coupon-collector's partition. 
See \citet{pitman2006combinatorial, de2015gibbs} for the detailed treatments. 

\begin{figure*}[!h]
    \centering
    \includegraphics[width = 0.9\textwidth]{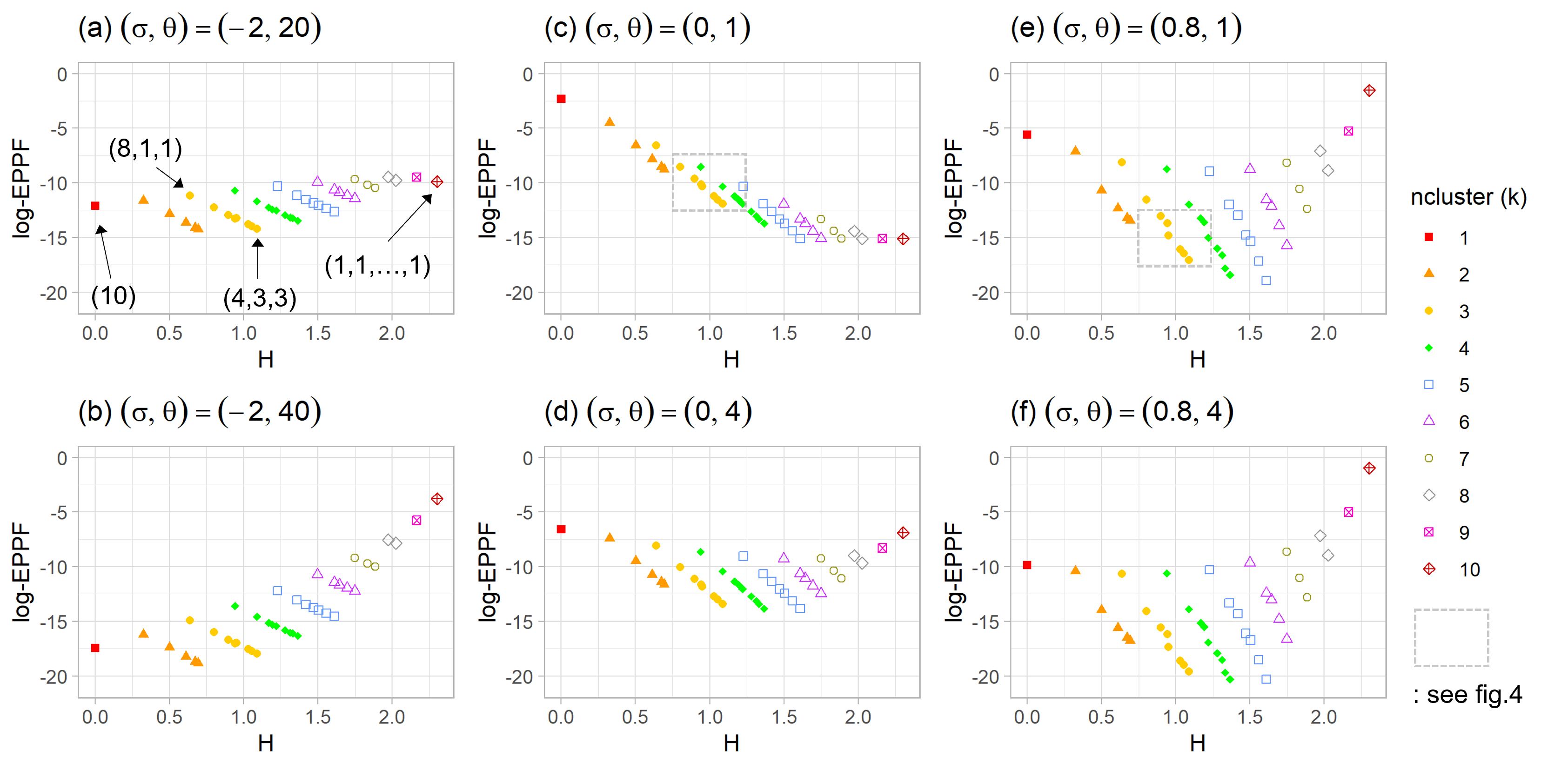}
    \caption{log-EPPF plots against Shannon index $H$ of Ewens-Pitman two-parameter models when $n=10$,  colored by the number of clusters. (a,b) Random partitions induced by Dirichlet-multinomial with symmetric Dirichlet parameter $|\sigma| = 2$ and number of components $K=10,20$ respectively. (c,d) Chinese restaurant process with concentration parameters $1$ and $4$ respectively. (e,f) Random partitions induced by Pitman-Yor process with common discount parameter $\sigma = 0.8$ and concentration parameters $1$ and $4$ respectively.}
    \label{fig:entropyspectrum}
\end{figure*}

\section{Balancedness of Gibbs Partition and Related Properties}
\label{sec:3}

\subsection{Why the Rich Get Richer?}
\label{subsec:3.1}
First, we define the balancedness of a finitely exchangeable random partition using the ordering $\prec$ in \cref{def:order}.

\begin{definition} \label{def:balancedness}Let $p^{(n)}$ be an EPPF of a finitely exchangeable random partition $\Pi_n$ on $[n]$. Then call $\Pi_n$ and $p^{(n)}$
\begin{itemize}[leftmargin=*]
    \item \textit{balance-averse} if $\bm{n}\prec \bm{n}'\implies p^{(n)}(\bm{n})\ge p^{(n)}(\bm{n}')$
    \item \textit{balance-seeking} if $\bm{n}\prec \bm{n}'\implies p^{(n)}(\bm{n})\le p^{(n)}(\bm{n}')$
\end{itemize}
for any fixed $k\le n$ and $\bm{n},\bm{n}'\in \mathcal{I}_n^k$. Also, call $\Pi_n$ \textit{balance-neutral} if $\Pi_n$ is both balance-averse and balance-seeking.
\end{definition}

The definition has a straightforward interpretation: $\Pi_n$ is balance-averse (seeking) when it assigns a lower (higher) probability to a more balanced partition in terms of $\prec$.

Now we present the necessary and sufficient conditions which characterize the balancedness of Gibbs partitions, where the proof is deferred to \cref{sec:appendixa}. 

\begin{theorem}
 Let $p^{(n)}$ be an EPPF of $\mathsf{Gibbs}_{[n]}(\bm{V},\bm{W})$. Then for any $n=1,2,\dots$, $p^{(n)}$ is
\begin{itemize}[leftmargin=*]
    \item balance-averse if and only if $\bm{W}$ is log-convex,
    \item balance-seeking if and only if $\bm{W}$ is log-concave,
\end{itemize}
where the sequence $\bm{W}=(W_s)_{s=1}^\infty$ is called log-convex~if $W_s^2 \le W_{s-1}W_{s+1}$ for $s\ge 2$, and is called log-concave~if $W_s^2 \ge W_{s-1}W_{s+1}$ for $s\ge 2$ and has no internal~zeros.
\label{thm:gibbs}
\end{theorem}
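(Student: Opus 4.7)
My approach is to reduce the problem to a single one-step inequality via \cref{def:order}(b), which says that $\prec$ on each $\mathcal{I}_n^k$ is generated by one-step downshifts. Consequently $p^{(n)}$ is balance-averse (respectively, balance-seeking) if and only if $p^{(n)}(\bm{n}') \le p^{(n)}(\bm{n})$ (respectively, $\ge$) for every single one-step $\bm{n} \mapsto \bm{n}'$ with $n_u \mapsto n_u-1$ and $n_v \mapsto n_v+1$, subject to $n_u - 1 \ge n_v + 1$. Because the downshift preserves the number of clusters $k$, the product form \eqref{eq:gibbspartition} makes the common factor $V_{n,k}$ cancel together with all the unchanged $W_{n_j}$'s, leaving
\[
\frac{p^{(n)}(\bm{n}')}{p^{(n)}(\bm{n})} \;=\; \frac{W_{n_u-1}\,W_{n_v+1}}{W_{n_u}\,W_{n_v}}.
\]
Thus the whole balancedness question reduces to whether this ratio is at most (respectively, at least) one over all admissible pairs with $n_u \ge n_v + 2$.

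Next I would recast the balance-averse inequality as $W_{n_v+1}/W_{n_v} \le W_{n_u}/W_{n_u-1}$ and observe that ranging over all admissible $(n_u,n_v)$ is exactly the statement that the consecutive-ratio sequence $r_s := W_{s+1}/W_s$ is nondecreasing, which is in turn equivalent to the log-convexity condition $W_s^2 \le W_{s-1}W_{s+1}$ for every $s \ge 2$. The balance-seeking case is entirely symmetric, giving $(r_s)$ nonincreasing and hence log-concavity. The ``$\Leftarrow$'' direction then follows at once by telescoping $r_s$. For the ``$\Rightarrow$'' direction I would isolate each inequality $W_s^2$ vs.\ $W_{s-1}W_{s+1}$ by exhibiting a concrete downshift realizing it, namely $(s+1,s-1)\mapsto(s,s)$ inside $\mathcal{I}_{2s}^2$ (or padded with singletons into $\mathcal{I}_{2s+m}^{2+m}$ for larger $n$ and $k$), whose EPPF ratio is exactly $W_s^2/(W_{s+1}W_{s-1})$.

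The main delicacy I expect is handling degeneracies. When $V_{n,k}=0$ the EPPF vanishes identically on $\mathcal{I}_n^k$ and the balancedness condition is vacuous there, so I may restrict attention to blocks with nontrivial normalization when extracting log-(con)vexity from downshift inequalities. When some $W_s=0$ the ratio $r_s$ is undefined: for log-convexity this is harmless, since $W_s^2 \le W_{s-1}W_{s+1}$ holds automatically and the multiplicative EPPF inequality still follows by direct comparison of products; for log-concavity this is precisely what the ``no internal zeros'' proviso is needed for, to rule out a $0 \le \text{positive}$ mismatch that would break the equivalence. Once these edge cases are accounted for, the remaining argument is a direct algebraic equivalence between the per-downshift product-form inequality and the log-(con)vex hypothesis on $\bm{W}$.
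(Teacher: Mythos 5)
Your proposal is correct and follows essentially the same route as the paper's proof: reduce to a single one-step downshift by transitivity, cancel $V_{n,k}$ and the unchanged factors in the product form so the comparison becomes $W_{n_u}W_{n_v}$ versus $W_{n_u-1}W_{n_v+1}$, and for the converse realize each inequality $W_s^2 \lessgtr W_{s-1}W_{s+1}$ by the downshift with $n_u-1=n_v+1=s$. Your explicit handling of the degenerate cases ($V_{n,k}=0$ and zeros in $\bm{W}$) is a bit more careful than the paper's, which only notes that log-convexity together with $W_1=1$ forces $W_s>0$ for all $s$, but this does not change the substance of the argument.
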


Thus, to check the balancedness of a Gibbs partition, it is not necessary to verify the conditions in \cref{def:balancedness}, and one only needs to analyze the log-convexity of $\bm{W}$. 

Gibbs partition admits a simple form of reallocation rule (prediction rule if projectivity holds) thanks to the product-form EPPF. 
The following corollary draws an explicit connection between the ``rich-get-richer'' metaphor and the balancedness of Gibbs partition based on \cref{def:balancedness}; see \cref{sec:appendixa} for the details.

\begin{corollary}
\label{coro:reallorule} Let $\Pi_{n+1}\sim \mathsf{Gibbs}_{[n+1]}(\bm{V},\bm{W})$. Given the cluster memberships of the first $n$ datapoints $\mathbf{z}_{1:n}$ with $k$ clusters, the reallocation rule for the next datapoint is
\begin{equation}
\label{eq:reallorule}
    \mathbb{P}(z_{n+1} =j |\mathbf{z}_{1:n}) \propto \begin{cases}
    f(n_j)& \text{if } j=1,\dots,k\\
    g(n,k)& \text{if }j=k+1
    \end{cases}
\end{equation}
Then $f$ is an increasing (decreasing) function over $\mathbb{N}$ if and only if $\Pi_{n+1}$ is balance-averse (seeking) for any $n\in\mathbb{N}$.
\end{corollary}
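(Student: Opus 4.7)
The plan is to derive the explicit form of $f$ directly from the product-form EPPF, and then observe that monotonicity of $f$ translates immediately into log-convexity/log-concavity of $\bm{W}$, at which point \cref{thm:gibbs} supplies the equivalence with balancedness.

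First I would compute $\mathbb{P}(z_{n+1}=j\mid \mathbf{z}_{1:n})$ explicitly. Given $\mathbf{z}_{1:n}$ with cluster sizes $n_1,\ldots,n_k$, assigning $z_{n+1}=j$ for $j\le k$ produces a partition of $[n+1]$ whose cluster sizes equal those of $\mathbf{z}_{1:n}$ except $n_j$ is replaced by $n_j+1$, while $z_{n+1}=k+1$ opens a new singleton cluster. Plugging these into the Gibbs EPPF \eqref{eq:gibbspartition} gives, up to the common factor $V_{n+1,k}\prod_{i=1}^k W_{n_i}$ that is constant in $j$,
\begin{equation*}
\mathbb{P}(z_{n+1}=j\mid \mathbf{z}_{1:n})\propto
\begin{cases}
W_{n_j+1}/W_{n_j}, & j=1,\ldots,k,\\[2pt]
(V_{n+1,k+1}/V_{n+1,k})\,W_1, & j=k+1,
\end{cases}
\end{equation*}
so one may take $f(s)=W_{s+1}/W_s$ and $g(n,k)=V_{n+1,k+1}/V_{n+1,k}$ (using $W_1=1$).

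Next I would observe the elementary fact that $f(s)=W_{s+1}/W_s$ is nondecreasing on $\mathbb{N}$ precisely when the consecutive ratios of $\bm{W}$ are nondecreasing, which is the definition of log-convexity of $\bm{W}$ (assuming $W_s>0$, which is implicit once we consider actual reallocation probabilities). Symmetrically, $f$ is nonincreasing iff the consecutive ratios are nonincreasing, i.e.\ $\bm{W}$ is log-concave (the ``no internal zeros'' clause is automatic here since otherwise $f$ would fail to be defined on all of $\mathbb{N}$). Applying \cref{thm:gibbs} then yields the desired equivalence with balance-aversion and balance-seeking respectively, and since $f$ does not depend on $n$, the quantification ``for any $n\in\mathbb{N}$'' is inherited from the theorem.

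I do not anticipate a serious obstacle: the only mildly delicate point is to confirm that the equivalence is really a biconditional on both sides, which amounts to noting that $f(n_j)\propto W_{n_j+1}/W_{n_j}$ is determined by $\bm{W}$ alone and not entangled with $\bm{V}$, so monotonicity of $f$ captures exactly the log-convexity/log-concavity condition in \cref{thm:gibbs} with no extra slack. Routine care is needed only in handling the proportionality constant (so that $f$ is well-defined up to a positive multiple, preserving monotonicity) and in noting that $W_1=1$ ensures the formula for $g(n,k)$ is clean.
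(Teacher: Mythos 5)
Your proposal is correct and follows essentially the same route as the paper: derive $f(s)=W_{s+1}/W_s$ and $g(n,k)=V_{n+1,k+1}/V_{n+1,k}$ from the product-form EPPF, note that monotonicity of these consecutive ratios is exactly log-convexity/log-concavity of $\bm{W}$, and invoke \cref{thm:gibbs}. The only details worth keeping explicit in a final write-up are the ones you already flag — positivity of the $W_s$ along the support and the normalization $W_1=1$.
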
 

In addition to the finite exchangeability, many existing random partition model also assumes projectivity (aforementioned examples of Gibbs partitions in \cref{subsec:2.3} are all infinitely exchangeable). 
By \cref{prop1}, the following corollary states that infinitely exchangeable Gibbs partitions are always balance-averse.

\begin{corollary}
\label{coro:infinitegibbs}
Let $p^{(n)}$ be an EPPF of infinitely exchangeable Gibbs partition. Then $p^{(n)}$ is always balance-averse; i.e., for two integer partitions $\bm{n},\bm{n}'\in \mathcal{I}_n^k$ with $k\le n$,
\[
\bm{n} \prec \bm{n}' \implies p^{(n)}(\bm{n}) \ge p^{(n)}(\bm{n}')
\]
with equality holds only if $\sigma = -\infty$. 
\end{corollary}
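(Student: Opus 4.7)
The plan is to combine \cref{prop1} with \cref{thm:gibbs}. By \cref{prop1}, every infinitely exchangeable Gibbs partition has weight sequence $W_s = \Gamma(s-\sigma)/\Gamma(1-\sigma)$ for some $\sigma \in [-\infty, 1)$, with the convention $W_s \equiv 1$ when $\sigma = -\infty$. Thus, by \cref{thm:gibbs}, the whole claim reduces to verifying that this particular $\bm{W}$ is log-convex, and then understanding when log-convexity is strict.

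For $\sigma \in (-\infty, 1)$ and $s \ge 2$, using $\Gamma(x+1) = x\Gamma(x)$, one gets
\begin{equation*}
\frac{W_{s-1}W_{s+1}}{W_s^2} = \frac{\Gamma(s-1-\sigma)\Gamma(s+1-\sigma)}{\Gamma(s-\sigma)^2} = \frac{s-\sigma}{s-1-\sigma} > 1,
\end{equation*}
since $s - 1 - \sigma > 0$ under $\sigma < 1$ and $s \ge 2$. Hence $\bm{W}$ is strictly log-convex. When $\sigma = -\infty$, $W_s \equiv 1$ trivially satisfies $W_s^2 = W_{s-1}W_{s+1}$, so $\bm{W}$ is log-convex (with equality). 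In both cases \cref{thm:gibbs} yields $\bm{n} \prec \bm{n}' \implies p^{(n)}(\bm{n}) \ge p^{(n)}(\bm{n}')$.

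To pin down when equality can occur, I would trace through the one-step downshift characterization in \cref{def:order}(b). Any $\bm{n}\prec\bm{n}'$ is connected by a finite chain of downshifts $(n_u, n_v)\mapsto(n_u-1, n_v+1)$ with $n_u - 1 \ge n_v + 1$. Along each such step, the Gibbs product form gives
\begin{equation*}
\frac{p^{(n)}(\bm{n}')}{p^{(n)}(\bm{n})} = \frac{W_{n_u-1}W_{n_v+1}}{W_{n_u}W_{n_v}}.
\end{equation*}
Strict log-convexity of $\bm{W}$ is equivalent to $W_{s+1}/W_s$ being strictly increasing in $s$; since $n_u - 1 \ge n_v + 1$ forces $n_u > n_v$, this ratio is strictly less than $1$ whenever $\sigma \in (-\infty, 1)$, so a strict inequality $p^{(n)}(\bm{n}) > p^{(n)}(\bm{n}')$ propagates along the chain. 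On the other hand, if $\sigma = -\infty$ then $W_s\equiv 1$ and the ratio is always $1$, leaving open the possibility of equality (indeed, in that regime only $V_{n,k}$ governs the probability, so the EPPF depends on $\bm{n}$ only through $k$).

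The arithmetic is routine; the one subtlety worth being careful about is the boundary case $\sigma = -\infty$, where both the form of $W_s$ and the recursion in \cref{prop1}(ii) must be interpreted as the stated limits, and where the ``no internal zeros'' caveat in the log-concavity definition of \cref{thm:gibbs} explains why this case is in fact balance-neutral rather than only balance-averse.
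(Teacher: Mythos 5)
Your proposal is correct and follows essentially the same route as the paper: the paper likewise reads off $W_s=\Gamma(s-\sigma)/\Gamma(1-\sigma)$ from \cref{prop1}, notes that this sequence is strictly log-convex for $\sigma\in(-\infty,1)$ and identically $1$ for $\sigma=-\infty$, and concludes via \cref{thm:gibbs}. Your explicit computation of $W_{s-1}W_{s+1}/W_s^2=(s-\sigma)/(s-1-\sigma)$ and the propagation of strictness along the downshift chain simply spell out details the paper leaves implicit.
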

To see this, $W_{s}=\Gamma(s-\sigma)/\Gamma(1-\sigma)$, $s=1,2,\dots$ is strictly log-convex for any $\sigma\in (-\infty,1)$ and $W_{s} \equiv 1$ for~$\sigma = -\infty$.  
\Cref{fig:entropyspectrum} depicts the behavior of $\log p^{(n)}$ for the three different Ewens-Pitman two-parameter models in (\ref{eq:twoparameter}). 
The x-axis is the Shannon index $H$, which satisfies $\bm{n}\prec \bm{n}' \implies H(\bm{n})<H(\bm{n}')$. 
All scatterplots show a decreasing pattern for each $k$ as $H$ increases (i.e., partitions become more balanced), since two-parameter models are all infinitely exchangeable and thus balance-averse. 

\cref{coro:infinitegibbs} answers why the ``rich-get-richer'' property is shared across many existing random partition models. At the same time, it also implies that one should sacrifice projectivity to get a more flexible class of exchangeable random partition models with the balance-seeking property. 
The cost of sacrificing projectivity depends on the application. When the cluster membership of the future datapoint is a primary interest, lack of projectivity leads to an undesirable consequence that the implied joint distribution on the current dataset changes as we get more data. 
However, it has little impact if the primary goal is to make an inference on the given dataset such as entity resolution, and opens up a wide range of possible models with different balancedness.
\newpage 

\Cref{thm:gibbs} can be extended to the Gibbs partitions with additional hierarchical structure on $\bm{V}$ or $\bm{W}$. 
\begin{theorem} We say $\Pi_n$ is a mixture of Gibbs partition if it is finitely exchangeable and its EPPF $p^{(n)}$ has a form 
\begin{equation}
\label{eq:eppfmixturegibbs}
\textstyle
p^{(n)}(n_1,\ldots,n_k)=\int V_{n,k}(\vartheta)\prod_{j=1}^k W_{n_j}(\vartheta) \nu(d\vartheta)
\end{equation}
where $\vartheta$ is a mixing parameter which may be either discrete or continuous. Then for any $n=1,2,\ldots$, $p^{(n)}$ is
\begin{itemize}[leftmargin=*]
    \item balance-averse if $(W_s(\vartheta))$ is log-convex for each $\vartheta$,
    \item balance-seeking if $(W_s(\vartheta))$ is log-concave for each $\vartheta$.
\end{itemize}
\label{thm:mixgibbs}
\end{theorem}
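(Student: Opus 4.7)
The plan is to reduce \cref{thm:mixgibbs} to a pointwise (in $\vartheta$) application of \cref{thm:gibbs} and then integrate. Fix $k \leq n$ and two integer partitions $\bm{n}, \bm{n}' \in \mathcal{I}_n^k$ with $\bm{n} \prec \bm{n}'$. By part (b) of \cref{def:order} and transitivity of $\prec$, it suffices to treat a single one-step downshift, i.e.\ $\bm{n}' = (n_1, \dots, n_u - 1, \dots, n_v + 1, \dots, n_k)$ with $n_u - 1 \geq n_v + 1$; the general case follows by chaining the resulting inequalities along the finite sequence of one-step downshifts from $\bm{n}$ to $\bm{n}'$.

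For a one-step downshift, the products appearing in (\ref{eq:eppfmixturegibbs}) differ in exactly two factors, so
\begin{equation*}
\prod_{j=1}^k W_{n_j}(\vartheta) \,-\, \prod_{j=1}^k W_{n_j'}(\vartheta) \,=\, \Bigl(\prod_{j \neq u,v} W_{n_j}(\vartheta)\Bigr)\,\bigl[\,W_{n_u}(\vartheta) W_{n_v}(\vartheta) \,-\, W_{n_u-1}(\vartheta) W_{n_v+1}(\vartheta)\,\bigr].
\end{equation*}
The key inequality --- which is exactly the one that drives the proof of \cref{thm:gibbs} --- is that log-convexity of $(W_s(\vartheta))_s$ makes the ratio $W_{s+1}(\vartheta)/W_s(\vartheta)$ nondecreasing in $s$, and since $n_u - 1 \geq n_v + 1 > n_v$, this yields $W_{n_u}(\vartheta) W_{n_v}(\vartheta) \geq W_{n_u-1}(\vartheta) W_{n_v+1}(\vartheta)$ for every $\vartheta$. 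All remaining factors $W_{n_j}(\vartheta)$ and $V_{n,k}(\vartheta)$ are nonnegative by the Gibbs convention, so multiplying through by $V_{n,k}(\vartheta)$ preserves the inequality pointwise in $\vartheta$. Integrating against $\nu$ then gives $p^{(n)}(\bm{n}) \geq p^{(n)}(\bm{n}')$, proving the balance-averse case. The balance-seeking case is perfectly symmetric: log-concavity of $(W_s(\vartheta))_s$ reverses the sign of the bracket pointwise, and integration preserves the reversal.

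I do not anticipate a real obstacle, since the inequality we need is already pointwise in $\vartheta$ \emph{before} integration; the mixture simply averages a family of inequalities all of the same sign. The only subtle point worth flagging after the proof is that \cref{thm:gibbs} was an ``if and only if,'' whereas \cref{thm:mixgibbs} is stated only as a one-sided sufficient condition: pointwise log-convexity of $(W_s(\vartheta))_s$ is sufficient but not necessary for balance-aversion of the mixture, because cancellations under $\nu$ could in principle produce a balance-averse $p^{(n)}$ even when $(W_s(\vartheta))_s$ fails to be log-convex on a $\nu$-nonnull set of $\vartheta$. This is the reason the theorem is phrased with ``if'' rather than ``if and only if,'' and the proof should close with a brief remark to that effect.
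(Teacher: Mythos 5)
Your proof is correct and takes essentially the same route as the paper's: reduce to a single one-step downshift via transitivity, apply the log-convexity inequality $W_{n_u}(\vartheta)W_{n_v}(\vartheta)\ge W_{n_u-1}(\vartheta)W_{n_v+1}(\vartheta)$ pointwise in $\vartheta$, multiply by the nonnegative factors $V_{n,k}(\vartheta)$ and the remaining $W_{n_j}(\vartheta)$, and integrate against $\nu$. Your closing remark about why the converse fails for mixtures is a sensible addition but not part of the paper's argument.
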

It provides sufficient conditions for (\ref{eq:eppfmixturegibbs}) being balance-averse or balance-seeking; see \cref{sec:appendixa} for the proof. Examples include random partitions induced by DP with prior on the concentration parameter $\theta$, PYP with prior on the discount parameter $\sigma$ \citep{lijoi2007controlling}, and generalized MFM recently proposed by \citet{fruhwirth2021generalized}.

\subsection{Balance-neutral Random Partition Model}
\label{subsec:3.2}
There have been some efforts to develop random partitions that do not exhibit the ``rich-get-richer'' property \citep{jensen2008bayesian, wallach2010alternative} or control its rate \citep{lu2018reducing,poux2021powered} but at the cost of sacrificing exchangeability. Here we show that there does exist an exchangeable, projective random partition model that is balance-neutral, and \cref{thm:neutral} provides its only possible form of EPPF; see \cref{sec:appendixa} for the proof.

\begin{theorem}
\label{thm:neutral}
Let $p^{(n)}$ be an EPPF of balance-neutral infinitely exchangeable random partition. Then there exist some mixing distribution $q$ on the number of components $K\in\mathbb{N}$ so that
\begin{equation}
\label{eq:thmneutral}
\textstyle
    p^{(n)}(n_1,\dots,n_k) = \sum_{K=k}^\infty q(K)\frac{K(K-1)\cdots(K-k+1)}{K^n} 
\end{equation}
\end{theorem}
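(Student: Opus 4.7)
The plan is to combine \cref{thm:gibbs} and \cref{prop:infinitegibbs} to reduce the problem to a recursion on $V_{n,k}$, and then identify its extreme solutions with coupon-collector partitions via Kingman's paintbox representation.

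First I would argue that balance-neutrality forces $p^{(n)}(n_1, \ldots, n_k)$ to depend only on $(n,k)$ and not on the particular cluster-size vector. By \cref{def:balancedness}, balance-neutral means $p^{(n)}(\bm{n}) = p^{(n)}(\bm{n}')$ whenever $\bm{n} \prec \bm{n}'$, and by \cref{def:order}(b) every element of $\mathcal{I}_n^k$ is joined to the most balanced element $(\lceil n/k \rceil, \ldots, \lfloor n/k \rfloor)$ by a finite chain of one-step downshifts; so $p^{(n)}$ is constant on $\mathcal{I}_n^k$. Writing $V_{n,k}$ for this common value puts the EPPF in the Gibbs form (\ref{eq:gibbspartition}) with $W_s \equiv 1$.

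Next I would apply \cref{prop:infinitegibbs}: since $W_s \equiv 1$ is compatible only with the degenerate case $\sigma = -\infty$ there, infinite exchangeability is equivalent to the recursion $V_{n,k} = k\,V_{n+1,k} + V_{n+1,k+1}$ for $1 \le k \le n$, with $V_{1,1}=1$. A direct algebraic check shows that for each $K \in \mathbb{N}$ the coupon-collector sequence $V_{n,k}^{(K)} := K(K-1)\cdots(K-k+1)/K^n$ solves this recursion and defines a valid EPPF (the case (c) below (\ref{eq:twoparameter})), and any nonnegative mixture over $K$ also does; this yields the ``if'' direction of (\ref{eq:thmneutral}).

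For the converse I would invoke Kingman's paintbox representation: every infinitely exchangeable partition of $\mathbb{N}$ arises by i.i.d.\ sampling from a random probability measure, whose ranked atom masses $(P_1, P_2, \ldots)$ coincide a.s.\ with the limiting cluster proportions $n_j/n$. The requirement that $p^{(n)}$ depend only on $(n,k)$ forces the ranked frequencies to take the form $(1/K, \ldots, 1/K, 0, 0, \ldots)$ a.s.\ for some random $K \in \mathbb{N}$, because it simultaneously rules out unequal positive atoms and any continuous ``dust'' component; taking $q$ to be the law of $K$ then yields (\ref{eq:thmneutral}). This symmetry-and-no-dust step is the main obstacle, since both possibilities must be excluded carefully from the paintbox formula. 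A self-contained alternative is to solve the recursion by induction on $n - k$, which shows that all $V_{n,k}$ are determined by the diagonal values $V_{n,n}$ (the probability that $n$ datapoints land in $n$ distinct blocks), and to recover $q$ by recognizing $V_{n,n}$ as a Hausdorff moment sequence in $1/K$.
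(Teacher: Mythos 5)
Your opening reduction is exactly the paper's: balance-neutrality forces $p^{(n)}$ to be constant on each $\mathcal{I}_n^k$ (the paper sandwiches any $\bm{n}$ between the most unbalanced and most balanced elements; your chain of one-step downshifts to the top element is the same argument), so the EPPF is Gibbs with $W_s\equiv 1$ and $V_{n,k}$ equal to the common value. From there the paper does \emph{not} classify the infinitely exchangeable $W_s\equiv 1$ Gibbs partitions itself: it invokes \citet{gnedin2006exchangeable}, who show that every infinitely exchangeable Gibbs partition with $\sigma<0$, including $\sigma=-\infty$, is a mixture over $K\in\mathbb{N}$ of the Ewens--Pitman $(\sigma,|\sigma|K)$ family, which for $\sigma=-\infty$ is precisely the coupon-collector form in \eqref{eq:thmneutral}.

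Your replacement for that citation is where the gap sits. Checking that the coupon-collector arrays and their mixtures satisfy $V_{n,k}=kV_{n+1,k}+V_{n+1,k+1}$ is the easy direction and is not what the theorem asserts; the necessity direction is the entire content, and both of your proposed routes stop exactly at the hard step. In the paintbox route, you must show that a \emph{random} ranked-frequency sequence whose induced EPPF is constant on each $\mathcal{I}_n^k$ almost surely has all positive atoms equal and no dust. For a deterministic paintbox this does follow from power-sum identities (constancy on $\mathcal{I}_n^2$ forces $s_as_b$ to depend only on $a+b$, hence $s_a=c^{a-1}$ and all nonzero atoms equal $c=1/K$), but for a random paintbox the constancy conditions only constrain expectations of symmetric functionals, and upgrading this to an almost-sure statement about the atoms is a genuine argument you acknowledge and do not supply. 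In the moment route, your observation that the array is recovered from the diagonal via $V_{n+1,k}=(V_{n,k}-V_{n+1,k+1})/k$ is correct, but ``recognizing $V_{n,n}$ as a Hausdorff moment sequence in $1/K$'' begs the question: a Hausdorff-type argument would at best produce a mixing measure on $[0,1]$, and the essential point---that nonnegativity of \emph{all} the resulting $V_{n,k}$ forces this measure onto the discrete set $\{1/K:K\in\mathbb{N}\}$ rather than onto non-integer reciprocals (for which $K(K-1)\cdots(K-k+1)$ changes sign)---is precisely the theorem of Gnedin and Pitman. Either carry one of these arguments through or, as the paper does, cite the classification directly.
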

\cref{fig:balanceneutral} (left) gives an example when $q$ is $\mathsf{Poisson}(3)$ shifted by 1, showing the flat EPPF pattern for each $k$. It can be also viewed as the limiting case of the random partition induced by MFM when $|\sigma|\to\infty$. The prediction rule is 
\begin{equation}
\mathbb{P}(z_{n+1}=j|\mathbf{z}_{1:n}) \propto \begin{cases} 1 & \text{if  }j=1,\ldots,k\\ \frac{V_{n+1,k+1}(q)}{V_{n+1,k}(q)} &\text{if } j=k+1\end{cases}
\end{equation}

where $V_{n,k}(q)$ is the RHS of \eqref{eq:thmneutral} and these can be precomputed in advance; see \citet{miller2018mixture} for details. 
This balance-neutral random partition model can serve as a noninformative prior choice in terms of balancedness, as it assigns uniform probabilities for each $\Pi_n\in \mathcal{P}_{[n]}^k$ while $q$ controls the creation of new clusters.

\begin{figure}
    \centering
    \includegraphics[width = 0.45\textwidth]{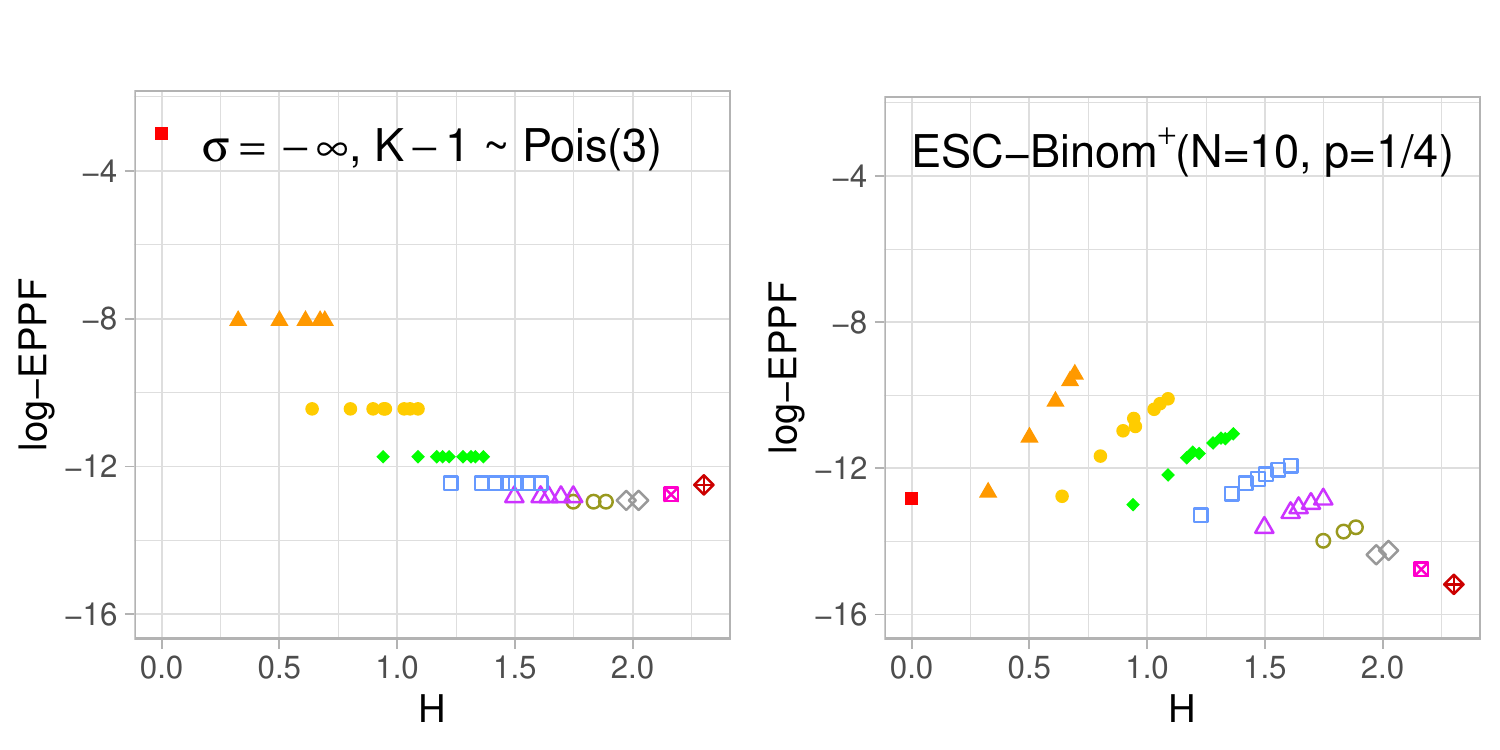}
    \caption{Two examples of log-EPPF plots, see \cref{fig:entropyspectrum} for the legend. (Left) Balance-neutral, when~$q=1+ \mathsf{Poisson}(3)$. (Right) Balance-seeking, when ESC model with $\bm\mu = \mathsf{Binom}^+(10,1/4)$. }
    \label{fig:balanceneutral}
\end{figure}

\subsection{Comparing the Strength of Balancedness with B-sequence}
\label{subsec:3.3}
\cref{thm:gibbs} indicates that the log-convexity/concavity of sequence $(W_s)_{s=1}^\infty$ determines the balancedness of Gibbs partition. Then a natural question arises: how to compare the balancedness between two random partitions? Which Gibbs partition is more balanced over the other?

Assuming $(W_s)$ has no internal zeros, we can use the concept of \textit{relative log-concavity ordering} $\lelc$ \citep{whitt1985uniform,yu2010relative} which is a preorder satisfying reflexivity and transitivity. 
$\bm{W}$ is called log-concave relative to $\bm{W}'$, written as $\bm{W}\lelc \bm{W}'$, if $\mathrm{supp}(\bm{W})\subseteq \mathrm{supp}(\bm{W}')$ and $(\log(W_s/W_{s}'))$ is concave in $\mathrm{supp}(\bm{W})$, the support of $\bm{W}$. 
While the comparison of $\bm{W}$ in ordering $\lelc$ itself can be used to compare the balancedness, there is also a need to define an intuitive measure that quantifies the balancedness so that we propose \textit{B-sequence} as follows. 

\begin{definition}[B-sequence]$(B_s(\bm{W}))_{s\ge 2}$ is a B-sequence of $\mathsf{Gibbs}_{[n]
}(\bm{V},\bm{W})$ for $n\in\mathbb{N}$, a sequence of extended real numbers which only depends on $\bm{W}$, defined as
\begin{equation}
    B_s(\bm{W}) = -s(\log W_{s+1}-2\log W_s + \log W_{s-1})
    \label{eq:balancedness}
\end{equation}
with the provision that $B_s(\bm{W})=+\infty$ if $W_{s+1}=0$. 
\end{definition}
This definition is closely related to the slope of log-EPPF against the Shannon index $H$. 
Recall the covering relation from \cref{def:order}(c) corresponding to the case $(**)$. If $\bm{n},\bm{n}'\in\mathcal{I}_n^k$ and $\bm{n}'$ is a $(**)-$cover of $\bm{n}$, then there exists some $s\ge 2$ such that $s=n_u'=n_u-1=n_v'=n_v+1$ with $u<v$. We prove that $nB_s(\bm{W})$ is an approximation of the slope along this covering relation as shown in \cref{fig:balanceindexinterpret}: 
\begin{equation}
    \frac{\log p^{(n)}(\bm{n}')-\log p^{(n)}(\bm{n})}{H(\bm{n}') - H(\bm{n})} \approx nB_s(\bm{W}),
    \label{eq:slope}
\end{equation}
where we defer the detailed derivation of approximation to \cref{sec:appendixb}. That is, B-sequence represents the strength of balancedness by measuring the derivative of log-EPPF against Shannon index $H$, the part which is invariant of $n$. 

Here $B_s(\bm{W})\equiv 0$ is equivalent to the Gibbs partition being balance-neutral which serves as the origin, and $B_s(\bm{W})\ge 0$ $(\le 0)$ for all $s$ is equivalent to the Gibbs partition being balance-seeking (balance-averse) respectively. 
The part of (\ref{eq:balancedness}) inside the parenthesis is the discrete second derivative of the sequence $(\log W_s)$ and naturally relates to log-convexity.
Indeed, we have $\bm{W}\lelc\bm{W}'$ if and only if $B_s(\bm{W})\ge B_s(\bm{W}')$ for all $s\ge 2$. See \cref{sec:appendixb} for the proof.

\begin{figure}
    \centering
    \includegraphics[width = 0.47\textwidth]{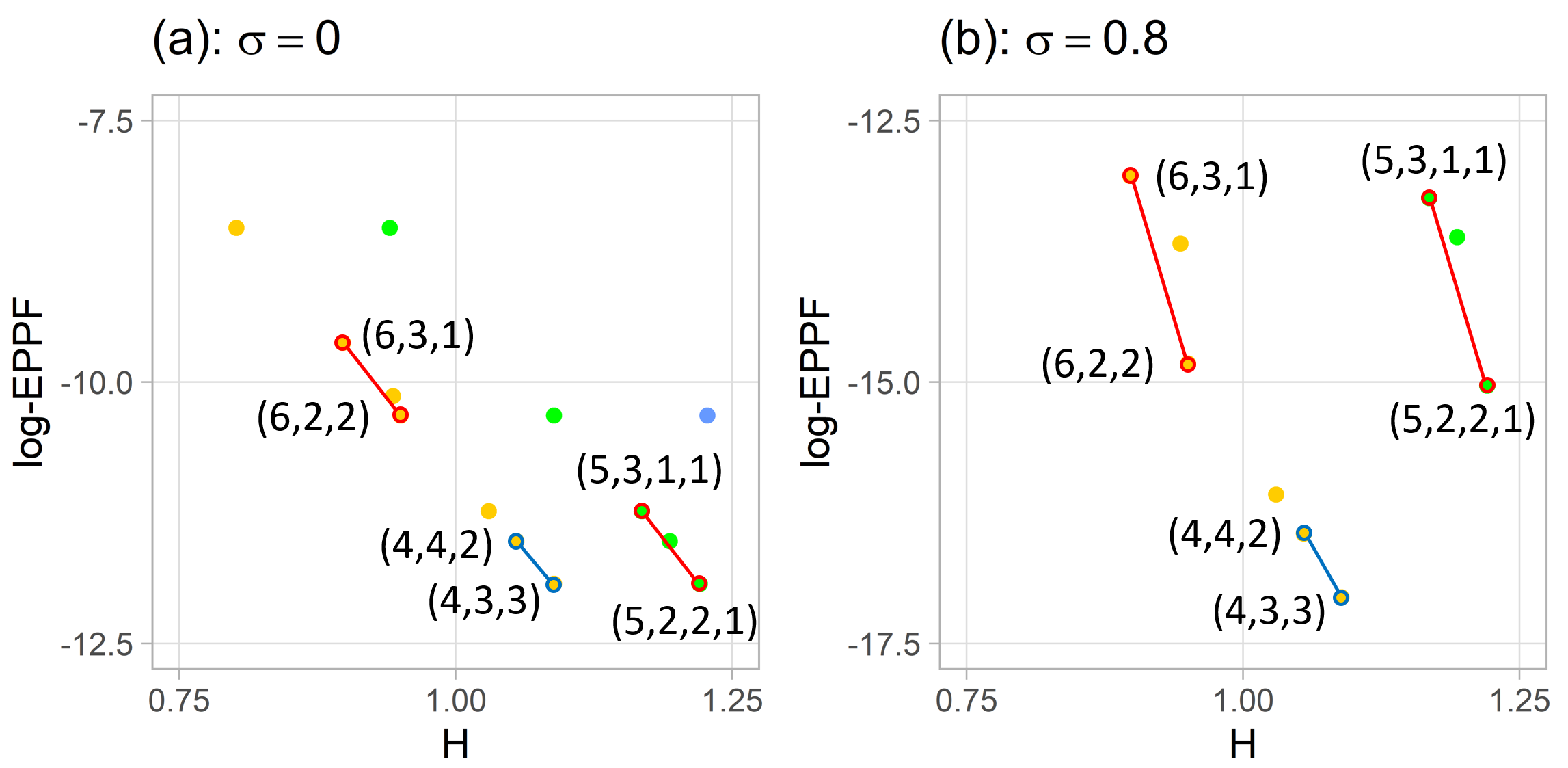}
    \caption{Graphical interpretation of B-sequence via (\ref{eq:slope}), where $n=10$ and $\bm{W}$ in \eqref{eq:twoparameter} is parameterized by $\sigma$. Subplots (a), (b) are zoomed portion of \cref{fig:entropyspectrum}(c), (e) respectively. Red segments correspond to $(**)$-covering relation with $s=2$, and blue with $s=3$ (see also \cref{fig:order}). Approximately, the slope of red segments~are $nB_2(\sigma=0)$ (left) and $nB_2(\sigma=0.8)$ (right), and blue segments are $nB_3(\sigma=0)$ (left) and $nB_3(\sigma=0.8)$ (right). 
    As $\sigma$ increases, slopes become steeper and the EPPF becomes more unbalanced.}
    \label{fig:balanceindexinterpret}
\end{figure}

\textbf{Examples}. 
Consider Ewens-Pitman two-parameter family \eqref{eq:twoparameter} where $\bm{W}$ is parameterized by $\sigma$. Then, for $\sigma<\sigma'<1$, the corresponding B-sequences $(B_s(\sigma))$ and $(B_s(\sigma'))$ satisfy $B_s(\sigma')<B_s(\sigma)<0$ for all $s\ge 2$. 
It implies that Dirichlet-multinomial or MFM ($\sigma<0$) based random partition model is more balanced than CRP ($\sigma=0$) induced by DP, and CRP is more balanced than the one induced by PYP ($\sigma\in(0,1)$). %
The slope interpretation in (\ref{eq:slope}) well matches with the pattern shown in \cref{fig:entropyspectrum}; the negative slope becomes steeper as $\sigma$ increases. 
This result is also consistent with the comparisons in \citet{green2001modelling,miller2018mixture} on the mixture modeling tasks, where CRP typically yields small extraneous clusters compared to Dirichlet-multinomial or MFM.

The effect of balancedness can be also found in topic model applications. Although the difference between LDA \citep{blei2003latent} and hierarchical DP \citep[HDP;][]{teh2006hierarchical} is often emphasized in terms of the limited/unlimited number of topics, it is almost always neglected that the prior on the per-document topic distribution of HDP is more unbalanced than those of LDA. As an example of the effect of balancedness, \citet{shi2019new} studied how stopwords (e.g. ``a'', ``the'', \ldots) affect the performance of topic modeling algorithms. 
As the proportion of stopwords increases, the performance of HDP deteriorates significantly compared to LDA since stopwords make the per-document topic proportion more balanced. It also suggests the importance of the hyperparameter choice which affects the balancedness of the model \citep{asuncion2009smoothing}.

For the community detection problems, \citet{legramanti2022extended} utilized infinitely exchangeable Gibbs partitions as a prior on the partition of nodes. In their simulation studies with true clusters being fairly balanced, a random partition prior with $\sigma=-1$ \citep{gnedin2010species} performed the best compared to other more unbalanced priors with $\sigma>-1$. Also see \citet{schmidt2013nonparametric} for a real network data application of the CRP prior \citep{kemp2006learning}, which yields many tiny community estimates. 
To sum up, the balancedness of random partition models can help understand these findings and choose random partition models with desired balancedness to tackle these problems.

\textbf{Remark}. The comparison of balancedness between some special cases of random partitions has been discussed before \citep{miller2018mixture,fruhwirth2021generalized} but from a different perspective. They considered the aggregated probabilities $\sum_{\Pi_n\in \mathcal{A}}\mathbb{P}(\Pi_n = \{S_1,\ldots,S_k\})$ with $\mathcal{A}= \{\Pi_n\in\mathcal{P}_{[n]}^k:\Pi_n \text{ has cluster sizes }(n_1,\ldots,n_k)\}$ (or equivalently, the distribution of \textit{labeled cluster sizes}), instead of the probability mass of each partition $\mathbb{P}(\Pi_n = \{S_1,\ldots,S_k\}) = p^{(n)}(|S_1|,\ldots,|S_k|)$, the EPPF. 
Comparison of aggregated probabilities has a somewhat unnatural 
interpretation when it comes to explaining the prior effect of random partition models on the posterior estimate. The probability mass of individual partition (EPPF) is what contributes to the posterior, not the aggregated probability. 
While the distribution of functionals over the labeled cluster sizes, such as mean and variance of the Shannon index \citep{Greve2022Spying}, could be indicative of comparison regarding balancedness, comparison of EPPF provides a more direct interpretation of the prior effect on the posterior estimate which is often a point of interest.

\section{Balance-seeking Random Partition Models}
\label{sec:4}

\subsection{Entity Resolution and ESC Model}
\label{sec:4.1}

Entity resolution (ER) is the process of matching records that describe the same individual when there are no unique identifiers available (such as due to privacy reasons). 
Assuming each record corresponds to one individual, ER can be understood as a clustering task where each cluster corresponds to each individual; see \citet{binette2022almost} for the most recent review of ER literature.

Databases have many noisy records, where reasons include address changes, name changes, measurement/transcription errors among many others. 
To deal with such distortions with uncertainty quantification, \citet{steorts2016bayesian} developed a full Bayesian hierarchical model, and we briefly describe the model here. 
Assume $n$ records $(\bm{x}_i)_{i=1}^n$ contain $L$ categorical fields with entries $(x_{i\ell})$. Let $D_\ell$ be the number of categories in the $\ell$th field and let the corresponding probabilities $\bm\theta_{\ell}=(\theta_{\ell1}, \ldots, \theta_{\ell D_{\ell}})$.  
Also let the cluster indicators be $(z_i)_{i=1}^n$, where the total number of individuals $K^+$ is unknown. Assuming the latent entities $(y_{k\ell})_{k=1}^{K^+}$ are drawn from $\mathsf{Categorical}(\bm\theta_\ell)$, the generative process of entries is $x_{i\ell} = y_{z_i\ell}$ with probability $1-\beta_{\ell}$ (non-distorted), and $x_{i\ell}\sim \mathsf{Categorical}(\bm\theta_\ell)$ with probability $\beta_\ell$ (distorted).
See \cref{table:database} for an example. With this Bayesian hierarchical model, by assigning a prior on the partition represented by cluster indicators $(z_i)$, one can obtain a posterior estimate of $z_i$ with uncertainty quantification of record matches.

\begin{table}
\caption{An example database with categorical entities $(x_{i\ell})$. Here $z_i$ are assumed to be unknown. Reds are `distorted' entities, modeled with distortion probabilities $(\beta_\ell)$ and latent entities $(y_{k\ell})$. }
\label{table:database}
\vskip 0.1in
\begin{center}
\begin{small}
\begin{sc}
\begin{tabular}{cccccc}
\toprule
$i$ & Sex & Surname & State & & $z_i$  \\
\midrule
1& F &\textcolor{red}{Smith} & CA & $\cdots$ & 1 \\
2& F &Johnson & CA & $\cdots$ & 1 \\
3& F &Johnson & CA & $\cdots$ & 1 \\
4& M &Williams & TX & $\cdots$ & 2 \\
5& M &Williams & TX & $\cdots$ & 2 \\
6& M &Williams & \textcolor{red}{FL} & $\cdots$ & 2\vspace{-1mm} \\
$\vdots$ &$\vdots$ & $\vdots$ &$\vdots$& & $\vdots$ \\
\bottomrule
\end{tabular}
\end{sc}
\end{small}
\end{center}
\vskip -0.1in
\end{table}

However, many traditional random partition models assume infinite exchangeability and thus are not suitable for ER tasks, since it leads to the behavior that the number of datapoints in each cluster grows linearly with $n$, by Kingman's paintbox representation theorem \citep{kingman1978representation}. 
To overcome this problem, \citet{miller2015microclustering,zanella2016flexible} formalized the \textit{microclustering property}: $M_n/n \stackrel{p}{\to} 0$ where $M_n$ is the maximum size of clusters so that cluster sizes grow sublinearly with $n$.
Recently \citet{betancourt2020random} proposed the exchangeable sequences of clusters (ESC) model, which provides a very general framework of random partition models possessing the microclustering property under mild assumptions. It sacrifices projectivity, and the key idea is to consider a discrete probability measure $\bm{\mu}=(\mu_s)_{s=1}^\infty$ with $\mu_1>0$ that represents the distribution of the cluster sizes $n_j=|S_j|$, $j=1,\ldots,k$. Conditional on $\bm\mu$, ESC is a Gibbs partition model where its EPPF is:
\begin{equation}
    p^{(n)}(n_1,\ldots,n_k|\bm{\mu}) = \frac{1}{\mathbb{P}(E_n|\bm{\mu})}\frac{k!}{n!}\prod_{j=1}^k n_j!\mu_{n_j} 
    \label{eq:esceppf}
\end{equation}
where $E_n := \left\{\text{there exists }k\in\mathbb{N}\text{ such that }\sum_{j=1}^k n_j = n\right\}$. 
When $\sum_{s=1}^\infty s\mu_s<\infty$, a random partition with EPPF of form (\ref{eq:esceppf}) has the microclustering property. 

\subsection{Balance-seeking ESC Models}
\label{sec:4.2}
The Balancedness of random partition models also plays an important role in ER tasks, since it is not plausible to assume that a small number of clusters dominate the whole database.
For example, assume there are $n=1000$ number of records of $K^+=100$ individuals. Then a balance-averse random partition model assigns a higher probability to a partition with cluster sizes $(901,1,\ldots,1)$ (length 100) compared to a partition with cluster sizes $(10,10,\ldots,10)$, suggesting the balance-seeking model would be better suited. 
The microclustering property has a similar rationale by limiting the growth rate of the maximum cluster size.  
While the microclustering property is based on the asymptotic behavior of cluster sizes, balancedness focuses on the non-asymptotic point of view and they complement each other.

We expand the current knowledge of ESC models by providing subclasses of ESC models
with different balancedness properties. By \cref{thm:gibbs} and (\ref{eq:balancedness}), $\bm\mu$ being the zero-truncated Poisson distribution in \eqref{eq:esceppf} is the choice such that the B-sequence becomes identically zero, leading to the following theorem; see \cref{sec:appendixa} for the proof. 

\begin{theorem}
 Let $p^{(n)}$ be an EPPF of ESC model with fixed measure $\bm\mu$. Then for any $n=1,2,3\cdots$,
\begin{itemize}[leftmargin=*]
    \item $p^{(n)}$ is balance-averse if and only if $\bm\mu \gelc \mathsf{Poisson}^+$,
    \item $p^{(n)}$ is balance-seeking if and only if $\bm\mu\lelc \mathsf{Poisson}^+$,
    \item $p^{(n)}$ is balance-neutral if and only if $\bm\mu=\mathsf{Poisson}^+$.
\end{itemize}
where $\mathsf{Poisson}^+$ indicates the p.m.f. of zero-truncated Poisson distribution with arbitrary parameter $\lambda$.
\label{thm:esc}
\end{theorem}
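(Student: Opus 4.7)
The plan is to reduce to \cref{thm:gibbs} by recognizing the ESC model as a Gibbs partition, and then translate the resulting log-convexity/log-concavity conditions on $(s!\mu_s)$ into the relative log-concavity ordering against $\mathsf{Poisson}^+$, exploiting the fact that $\mathsf{Poisson}^+$ is precisely the member of the ESC family whose associated $W$-sequence is log-linear.

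First, read off from \eqref{eq:esceppf} that the ESC EPPF has Gibbs form with
\[
V_{n,k}(\bm\mu) \;=\; \frac{k!}{n!\,\mathbb{P}(E_n\mid\bm\mu)}, \qquad W_s \;=\; s!\,\mu_s
\]
(rescaling by $\mu_1$ if one insists on the normalization $W_1=1$; this is irrelevant since log-convexity/log-concavity of $\bm{W}$ is scale-invariant). By \cref{thm:gibbs}, for every $n$ the EPPF $p^{(n)}$ is balance-averse iff $(s!\mu_s)$ is log-convex, balance-seeking iff $(s!\mu_s)$ is log-concave with no internal zeros, and balance-neutral iff both. The task therefore reduces to identifying these conditions with the $\lelc$/$\gelc$ comparisons to $\mathsf{Poisson}^+$.

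Second, compute the reference sequence: since $\mu_s^{\mathsf{Pois}^+(\lambda)} = \lambda^s/(s!(e^\lambda-1))$ for $s\ge 1$, the corresponding $W$-sequence $s!\,\mu_s^{\mathsf{Pois}^+(\lambda)} = \lambda^s/(e^\lambda-1)$ is log-linear in $s$. For any $\bm\mu$ with full support, write
\[
a_s \;=\; \log\bigl(\mu_s/\mu_s^{\mathsf{Pois}^+(\lambda)}\bigr) \;=\; \log(s!\,\mu_s) \;-\; s\log\lambda \;+\; \log(e^\lambda-1);
\]
the $s\log\lambda$ and constant terms are affine in $s$ and thus vanish under the second difference $a_{s+1}-2a_s+a_{s-1}$. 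Consequently $(a_s)$ is concave iff $(s!\mu_s)$ is log-concave and convex iff log-convex, irrespective of which reference $\lambda>0$ is chosen. Unpacking the definition of $\lelc$ immediately yields $\bm\mu\lelc\mathsf{Poisson}^+\Leftrightarrow (s!\mu_s)$ log-concave and $\bm\mu\gelc\mathsf{Poisson}^+\Leftrightarrow (s!\mu_s)$ log-convex, giving the first two bullets.

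Finally, for the balance-neutral case, the intersection of the two conditions forces $(s!\mu_s)$ to be simultaneously log-convex and log-concave, hence of the form $s!\mu_s = c\,r^s$ for some $c,r>0$. The normalization $\sum_{s\ge 1}\mu_s=1$ gives $c(e^r-1)=1$, so $\mu_s = r^s/(s!(e^r-1))$, which is exactly $\mathsf{Poisson}^+(r)$. The only mildly delicate point is the support condition baked into $\lelc$: because $\mathsf{Poisson}^+$ has full support on $\mathbb{N}$, $\bm\mu\gelc\mathsf{Poisson}^+$ forces $\bm\mu$ to have full support, so the ``only if'' direction of the first bullet should be read under the tacit assumption that $\bm\mu$ is not truncated (a log-convex $W$ with $W_1>0$ either has no zeros or only trailing zeros, and the latter case is naturally excluded by the ordering). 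I do not foresee a real obstacle; the whole argument is a direct calculation once \cref{thm:gibbs} is in hand, with the only substantive observation being that $\mathsf{Poisson}^+$ is the unique probability distribution whose $W$-sequence is log-linear.
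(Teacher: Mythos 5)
Your proposal is correct and follows essentially the same route as the paper's proof: both reduce to \cref{thm:gibbs} via the identification $W_s \propto s!\mu_s$, observe that the zero-truncated Poisson is exactly the distribution whose $W$-sequence is log-linear so that the affine terms cancel in the second difference defining relative log-concavity, and solve the normalization constraint $\sum_s \mu_s = 1$ to pin down $\mathsf{Poisson}^+$ in the balance-neutral case. Your parenthetical care about supports (that $\bm\mu \gelc \mathsf{Poisson}^+$ forces full support, and that log-convexity of $(s!\mu_s)$ with $\mu_1>0$ essentially rules out zeros) matches the paper's own remark on this point.
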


In \cref{table:muexamples}, we provide various examples of $\bm\mu$ that lead to balance-averse, neutral, and balance-seeking ESC models. 
See \cref{fig:balanceneutral} (right) for an example of log-EPPF plot of balance-seeking model when $\bm\mu= \mathsf{Binomial}^+(10,1/4)$, showing the increasing pattern for each $k$. 
\Cref{thm:esc} implies that a balance-seeking ESC model always possesses the microclustering property since all $\bm\mu$ which are relative log-concave to Poisson have finite moments of all orders \citep{johnson2013log}. 
Especially, $\bm\mu = \mathsf{Binomial}^+$ with fixed number of trial $N$ has the \textit{bounded microclustering property} \citep{betancourt2022prior}, the size of the largest cluster is upper bounded.

With the Bayesian hierarchical model for ER task outlined in \cref{sec:4.1}, one can choose a family of $\bm\mu$ to construct an ESC partition prior and carry out the posterior inference with Markov chain Monte Carlo (MCMC) methods. This includes a posterior sampling of cluster indices $z_i$ as well as $\theta_\mu$ (parameters of $\bm\mu$) from their conditional distributions.
For example, the full conditional distribution of $z_i$ is
\begin{equation*}
\mathbb{P}(z_{i}=j | -) \propto p(\bm{x}|\mathbf{z}_{-i}, z_{i}=j,-) \mathbb{P}(z_i=j|\mathbf{z}_{-i},\theta_\mu)
\end{equation*}
where the latter part $p^{-i}_{j}:=\mathbb{P}(z_i=j|\mathbf{z}_{-i},\theta_\mu)$ is
\begin{equation*}
p^{-i}_j\propto\begin{cases}
\frac{(n_j^{-i}+1)\mu_{n_j^{-i}+1}}{\mu_{n_j^{-i}}} \stackrel{let}{=} f(n_j^{-i};\theta_\mu) &\text {if } j=1, \ldots, k^{-i}, \\
\left(k^{-i}+1\right)\mu_1 & \text {if } j=k^{-i}+1
\end{cases}
\end{equation*}
where $k^{-i}$ and $n_j^{-i}$ are the number and sizes of clusters in $\mathbf{z}_{-i}$.
In \cref{sec:appendixc}, we provide details of EPPFs, reallocation rules and posterior inference algorithms of five different examples of $\bm\mu$ in \cref{table:muexamples}. 

Notably when $\bm\mu = \mathsf{Binomial}^+(N,p)$, the corresponding ESC model is balance-seeking and the reallocation probability to an existing cluster is proportional to $f(n_j^{-i};\theta_\mu)=N-n_j^{-i}$ for $n_j^{-i}\le N$, a linear function of $n_j^{-i}$ with a negative slope showing the ``rich-get-poorer'' characteristic.

When $n$ is very large, the standard Gibbs sampling algorithm which reallocates $z_i$ one by one may suffer from slow mixing of the Markov chain. We utilize the \textit{chaperones algorithm} \citep{miller2015microclustering,zanella2016flexible} which focuses on reallocations that have higher probabilities, but adopting other strategies (e.g. split-merge sampler) is also possible.
We remark that the change of balancedness of Gibbs partition will not affect the computational cost as long as $\bm{V},\bm{W}$ are readily available.
More details on posterior inference algorithms can be found in \cref{sec:appendixc}.

\begin{table}
\caption{Examples of $\bm\mu$ in ESC model and their supports, grouped by balancedness property. Superscript $+$ indicates zero-truncated distribution. $\mathsf{CMP}$ stands for a Conway-Maxwell-Poisson distribution \citep{shmueli2005useful} with parameters $\lambda$ and $\nu$.}
\label{table:muexamples}
\vskip 0.1in
\begin{center}
\begin{small}
\begin{tabular}{cccc}
\toprule
 & $\bm\mu$ & Support \\
\midrule
\multirowcell{3}{Balance-\\averse\\($B_s<0$)}& $\mathsf{NegativeBinomial}^+$ & $\mathbb{N}$\\
 & $\mathsf{Geometric}$ & $\mathbb{N}$\\
 & $\mathsf{Logarithmic}$ & $\mathbb{N}$ \\
\midrule
Neutral & $\mathsf{Poisson}^+$ & $\mathbb{N}$\\
\midrule
\multirowcell{3}{Balance-\\seeking\\$(B_s>0)$} & $\mathsf{Binomial}^+$ & finite\\
& $\mathsf{HyperGeometric}^+$  & finite, must contain 1\\
& $\mathsf{CMP}^+$ with $\nu>1$ & $\mathbb{N}$ \\ 
\bottomrule
\end{tabular}
\end{small}
\end{center}
\vskip -0.1in
\end{table}

\subsection{Real Data Application of Balance-seeking Models}
\label{sec:4.3}

In this section, we demonstrate the effectiveness of balance-seeking random partition for the ER task using the Survey of Income and Program Participation data \citep{sipp1000}. 
We use the same dataset (SIPP1000) that \citet{betancourt2020random} used to benchmark the performance; the database with $n=4116$ (number of records) and $K^+=1000$ (number of entities) was collected from the five waves of the longitudinal survey performed between 2005-2006. 
The main task is to recover the identifiers only using the $L=5$ categorical fields (sex, birth year, birth month, race, and state of residence), assuming the unique identifiers are unknown and compare the results with the true partition. 

With the SIPP1000 dataset and the hierarchical model described in \cref{sec:4.1}, we compare the performance of four different ESC models where  $\bm\mu$ is: (i) $\mathsf{Binomial}^+$ with a fixed number of trials $N=5$ (balance-seeking), (ii) $\mathsf{Poisson}^+$ (balance-neutral), (iii) $\mathsf{NegativeBinomial}^+$ (balance-averse), and (iv) $\mathsf{Dirichlet}$ \citep[neither balance-seeking nor averse;][]{betancourt2020random}. 
Hyperpriors and MCMC specification details are described in \cref{sec:appendixd}.

\begin{figure}[th]
    \centering
    \includegraphics[width = 0.48\textwidth]{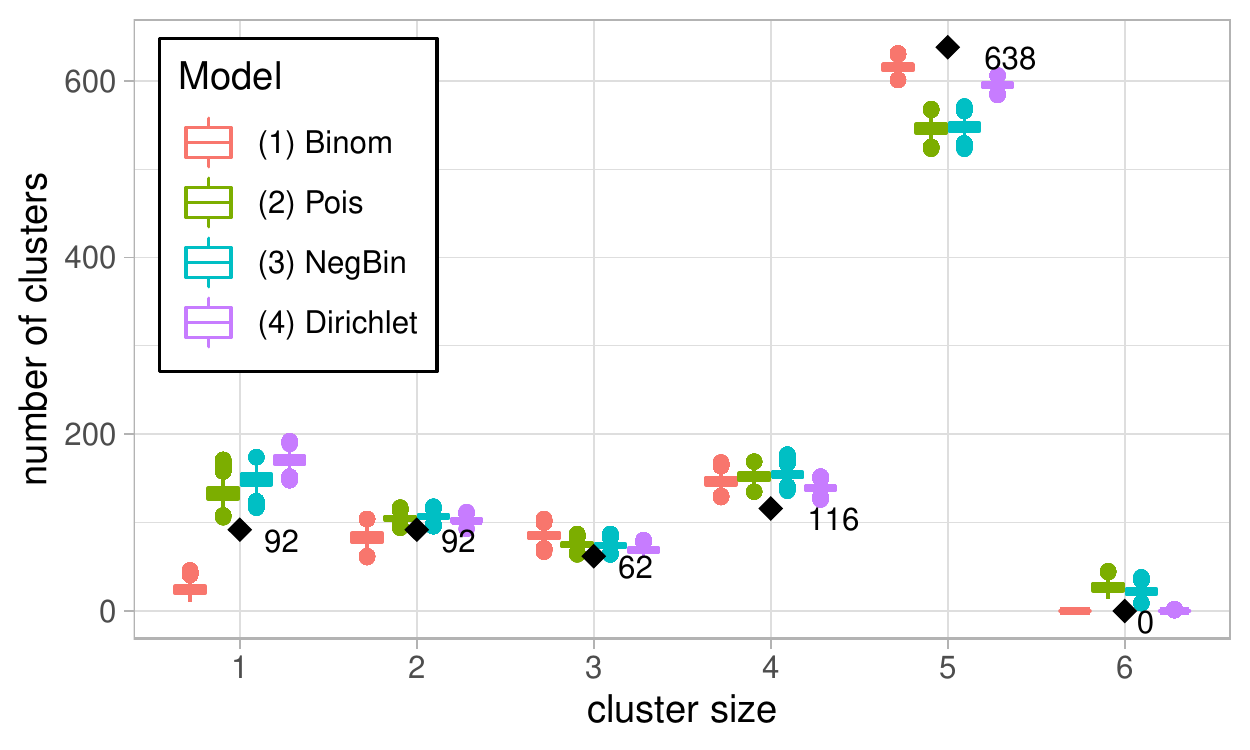}
    \caption{Posterior distributions (boxplots) of the number of clusters of size $i$ for each models, $m_i$, for the SIPP1000 data. True $m_i$ are shown as black diamonds and annotated by their values.}
    \label{fig:sippboxplot}
\end{figure}
\begin{table}[th]
\caption{SIPP1000 data: posterior mean and standard deviation of the number of entities ($K^+_{true}\!=\!1000$); FNRs and FDRs in \%.\protect\footnotemark}
\label{table:SIPP}
\vskip 0.1in
\begin{center}
\begin{small}
\begin{tabular}{cccccc}
\toprule
$\bm{\mu}$ (balance-) & $\mathbb{E}(K^+|\bm{x})$& SD & FNR & FDR \\
\midrule
$\mathsf{Binom}^+$ (seeking) & 951.5& 6.2 & \textbf{4.3} &  4.6\\
$\mathsf{Poisson}^+$ (neutral) & 1043.8& 8.8 & 4.9 & 3.3\\
$\mathsf{NegBin^+}$ (averse) & 1058.6& 8.8 & 4.9 & 3.0\\
$\mathsf{Dirichlet}$ (n/a) & 1076.4 & 5.3 & 4.8 & \textbf{1.5} \\ 
\bottomrule
\end{tabular}
\end{small}
\end{center}
\vskip -0.1in
\end{table}
\footnotetext{FNR and FDR are calculated based on the true partition and the point estimate obtained with \citet{dahl2006model}'s method. Let CP, MP, WP be the number of correct, missed, and wrong pairs respectively, then FNR $=$ MP$/$(MP$+$CP) and FDR $=$ WP$/$(WP$+$CP).}

The result in \cref{fig:sippboxplot} shows that the balance-seeking random partition prior penalizes the number of small-size clusters such as singletons, while the balance-averse prior does the opposite. 
This is evident since the balance-seeking model assigns a less prior probability to the partition with many singleton clusters where singletons make a partition unbalanced.
Consequently, in \cref{table:SIPP}, the balance-seeking model gives a more conservative estimate of the number of clusters with a lower false negative rate (FNR), but at the cost of yielding a higher false discovery rate (FDR).

The results indicate that balance-seeking random partition models are more effective in false negative control. Indeed, in many practical applications of ER, FNR control carries more weight than FDR control \citep{randall2013effect}. 
For example, if the matched record is used to inform patients of their conditions, it is much more important to reduce FNR (fail to inform) than FDR (inform to the wrong patient). 
In \cref{sec:appendixd}, we present additional simulation results with various different cluster size distributions and different distortion probabilities ($\beta_\ell$).
The results show that the balance-seeking model generally achieves a lower FNR rate compared to balance-neutral or balance-averse models.

\section{Concluding Remarks}

In this paper, we rigorously define and study the balancedness property of exchangeable random partition models, especially focusing on Gibbs partition models. 
By characterizing the ``rich-get-richer'' property as a balance-averseness, \cref{coro:infinitegibbs} provides an insight that two common assumptions (product-form exchangeability and projectivity) of random partition models lead to the ``rich-get-richer'' property. 
Although we conjecture that \textit{every} infinitely exchangeable random partitions are balance-averse, we remark that our result is very general because almost all existing random partition models are either Gibbs or a mixture of Gibbs due to the simple reallocation rule \eqref{eq:reallorule}.
There is emerging literature beyond the Gibbs partition framework \cite{favaro2011class,camerlenghi2018bayesian} and analyzing the balancedness of such models is an interesting future direction to pursue.

Another interesting future direction is to study the balancedness of non-exchangeable random partition models. Examples include random partitions with covariate information~\citep{park2010bayesian,muller2011product},
spatial or pairwise information~\citep{blei2011distance,page2016spatial,dahl2017random,xie2020bayesian}, baseline partition serves as a ``center''~\citep{smith2019demand,paganin2021centered}, temporal contiguity~\citep{barry1992product,barry1993bayesian, monteiro2011product} and spatial contiguity \citep{teixeira2019bayesian,luo2021bayesian,lee2021t,luo2021bast}. When random partition is no longer exchangeable, probability mass cannot be represented as an EPPF, and the extension of \cref{def:balancedness} is highly nontrivial.

To conclude, we characterize the complete family of balance-neutral random partitions, which can be used as a noninformative prior choice in terms of balancedness. We also propose \textit{B-sequence} to represent and compare the balancedness in a more principled and intuitive way.
We offer various flexible balance-seeking random partition models, which provide better modeling solutions for applications where traditional balance-averse models are not suitable and prediction is not the main interest (due to the lack of projectivity).
These models regularize the emergence of small-size clusters, which is useful in tasks such as entity resolution.
We hope this article can serve as a useful guide for researchers to better understand the different behaviors of random partition models and practitioners to decide on a suitable model according to specific applications.



\section*{Acknowledgements}
The research of Changwoo Lee and Dr. Huiyan Sang was supported by NSF grant no. NSF DMS-1854655. We thank anonymous reviewers, Hyunwoong Chang, and Dr. Yang Ni for helpful comments. We also thank Professor Rebecca C. Steorts for sharing the preprocessed SIPP1000 dataset.


\newpage 
\bibliography{balancedness}
\bibliographystyle{icml2022}

\newpage
\appendix
\onecolumn

In \cref{sec:appendixa}, we provide the proofs of all theorems in this paper: \cref{thm:gibbs}, \cref{coro:reallorule}, \cref{thm:mixgibbs}, \cref{thm:neutral}, and \cref{thm:esc}. In \cref{sec:appendixb}, we provide a detailed analysis of the B-sequence on its graphical interpretation and properties regarding balancedness.
In \cref{sec:appendixc}, we provide more details on the subclasses of the ESC model discussed in \cref{sec:4.2} and their corresponding inference algorithms using MCMC methods. Finally, in \cref{sec:appendixd}, we provide details on the SIPP1000 data analysis results in \cref{sec:4.3} and also present the simulation study results. In the supplementary material, we provide an interactive demo regarding the balancedness of the Ewens-Pitman two-parameter family and R code to run the balance-seeking ESC model described in \cref{sec:4.2} and \cref{sec:appendixc}.

\section{Proofs of Theorems}
\label{sec:appendixa}
\subsection{Proof of Theorem 3.2}

(Log-convexity of $\bm{W}$ implies balance-averseness) Let $\bm{n},\bm{n}'\in\mathcal{I}_n^k$ be two distinct integer partitions of $n$ into $k$ parts. To show that $\bm{n} \prec \bm{n}' \implies p^{(n)}(\bm{n})\ge p^{(n)}(\bm{n}')$, by the transitivity of $\prec$ and $\le$, it is sufficient to show for the case when $\bm{n}'$ can be reached by a single one-step downshift from $\bm{n}$. 
Let $\bm{n} = (n_1,\cdots,n_k)$ and its one-step downshift $\bm{n}' = (n_1',\cdots,n_k') = (n_1,\cdots n_u-1,\cdots, n_v+1,\cdots n_k)$ for some $1\le u<v \le k$ such that $n_u-1\ge n_v+1$. Then the log-convexity of $\bm{W}$, i.e. convexity of the sequence $(\log W_s)_{s=1}^{\infty}$ implies 
\begin{align*}
    & \log W_{n_v+1} - \log W_{n_v} \le  \log W_{n_u} - \log W_{n_u-1} & \because n_u-1 \ge n_v+1\\
    \implies&  \log W_{n_u} + \log W_{n_v} \ge \log W_{n_u-1} + \log W_{n_v+1}\\
    \implies& W_{n_u}W_{n_v} \ge W_{n_u-1}W_{n_v+1}=W_{n_u'}W_{n_v'}\\
    \implies& p^{(n)}(\bm{n}) = V_{n,k}\prod_{j=1}^k W_{n_j} \ge V_{n,k}\prod_{j=1}^k W_{n_j'} =  p^{(n)}(\bm{n}')
\end{align*}
Note that $W_s>0$ for all $s$, since $W_1=1$ and $W_s^2\le W_{s-1}W_{s+1}$ for all $s$. To see this, if $W_s=0$ for some $s=s^*$, then it implies $W_s=0$ for all $s\le s^*$, which contradicts to $W_1=1$.

(Balance-averseness implies log-convexity of $\bm{W}$) Let $\bm{n} = (n_1,\cdots,n_k)$ and $\bm{n}' = (n_1,\cdots,n_u-1,\cdots,n_v+1, \cdots,n_k)$ with $n_u-1 = n_v+1 \stackrel{let}{=} n^*$. Then $p^{(n)}(\bm{n})\ge p^{(n)}(\bm{n}')$ by balance-averseness, and this implies $W_{n^*}^2\le W_{n^*+1}W_{n^*-1}$. Since $n^*\in\{2,3,\ldots\}$ can be chosen arbitrarily as balance-averseness holds for any $k$ and $n$ such that $k\le n$, thus $\bm{W}$ is log-convex. 

The equivalency between balance-seeking and log-concavity follows with similar arguments.

\subsection{Proof of Corollary 3.3}
Let $\Pi_{n+1}\sim \mathsf{Gibbs}_{[n+1]}(\bm{V},\bm{W})$ so that the EPPF is $p^{(n+1)}(n_1,\dots,n_k) = V_{n+1,k}\prod_{l=1}^k W_{n_l}$. Let the cluster membership of first $n$ datapoints with $k$ clusters be $\mathbf{z}_{1:n}\in\{1,\dots,k\}^n$ and $n_l = \sum_{i=1}^n 1(z_i=l)$ for $l=1,\dots,k$.
We first derive the expression \eqref{eq:reallorule}, using $W_1=1$ by definition,

\begin{align*}
\mathbb{P}(z_{n+1}=j|\mathbf{z}_{1:n}) &\propto \mathbb{P}(z_{n+1}=j, \mathbf{z}_{1:n})\\
&= \begin{cases} 
V_{n+1,k} W_{n_j+1}\prod_{l\neq j} W_{n_l} &\text{if } j=1,\dots,k\\ 
V_{n+1,k+1} \prod_{l=1}^k W_{n_l} &\text{if }j=k+1
\end{cases}\\
&\propto \begin{cases} 
W_{n_j+1}/W_{n_j} \stackrel{let}{=} f(n_j) &\text{if } j=1,\dots,k\\ 
V_{n+1,k+1}/V_{n+1,k} \stackrel{let}{=} g(n,k)&\text{if }j=k+1
\end{cases}
\end{align*}
For any $n\in\mathbb{N}$, reallocation probability to the existing cluster $f(n_j) = W_{n_j+1}/W_{n_j}$ is an increasing (decreasing) function of $n_j=1,2,\dots$ if and only if $\bm{W}=(W_s)_{s=1}^\infty$ is log-convex (log-concave). Combined with \cref{thm:gibbs}, we have the \cref{coro:reallorule}. 

\subsection{Proof of Theorem 3.5}
Let $p^{(n)}$ be an EPPF of mixture of Gibbs partition of the form in (\ref{eq:eppfmixturegibbs}), i.e., 
$$
p^{(n)}(n_1,\ldots,n_k)=\int_\Theta V_{n,k}(\vartheta)\prod_{j=1}^k W_{n_j}(\vartheta) \nu(d\vartheta)
$$
Assume that $(W_s(\vartheta))_{s=1}^\infty$ is log-convex for each $\vartheta\in \Theta$, where $\Theta$ is the domain of $\vartheta$ that is either discrete or continuous. Fix $\vartheta$ at an arbitrary value, and let $\bm{n},\bm{n}'\in\mathcal{I}_n^k$. 
To show $\bm{n} \prec \bm{n}' \implies p^{(n)}(\bm{n})\ge p^{(n)}(\bm{n}')$, by the transitivity of $\prec$ and $\le$, it is again sufficient to show for the case when $\bm{n}'$ can be reached by a single one-step downshift from $\bm{n}$. 
Let $\bm{n} = (n_1,\cdots,n_k)$ and $\bm{n}' = (n_1',\cdots,n_k') = (n_1,\cdots n_u-1,\cdots, n_v+1,\cdots n_k)$ for some $1\le u<v \le k$ such that $n_u-1\ge n_v+1$. Then the convexity of sequence $(\log W_s(\vartheta))_{s=1}^\infty$ implies 
\begin{align*}
    & \log W_{n_v+1}(\vartheta) - \log W_{n_v}(\vartheta) \le  \log W_{n_u}(\vartheta) - \log W_{n_u-1}(\vartheta) & \because n_u-1 \ge n_v+1\\
    \implies&  \log W_{n_u}(\vartheta) + \log W_{n_v}(\vartheta) \ge \log W_{n_u-1}(\vartheta) + \log W_{n_v+1}(\vartheta)\\
    \implies& W_{n_u}(\vartheta)W_{n_v}(\vartheta) \ge W_{n_u-1}(\vartheta)W_{n_v+1}(\vartheta)=W_{n_u'}(\vartheta)W_{n_v'}(\vartheta)\\
    \implies& V_{n,k}(\vartheta)\prod_{j=1}^k W_{n_j}(\vartheta) \ge V_{n,k}(\vartheta)\prod_{j=1}^k W_{n_j'}(\vartheta) \ge 0,
\end{align*}
where the last line holds for arbitrary $\vartheta\in\Theta$. Since integration preserves inequality of functions, we have $p^{(n)}(\bm{n})\ge p^{(n)}(\bm{n}')$. The statement that log-concave $(W_s(\vartheta))_{s=1}^\infty$ implies a balance-seeking $p^{(n)}$ follows with similar arguments.

\subsection{Proof of Theorem 3.6}
Let $p$ be an EPPF of balance-neutral, infinitely exchangeable random partition. Note that any $\bm{n}\in \mathcal{I}_n^k$ is comparable with the most unbalanced one $\bm{n}^\star=(n-k+1,1,\cdots,1)$ and the most balanced one $\bm{n}^{\star\star}=(\underbrace{\lceil n/k\rceil,\cdots, \lceil n/k\rceil}_{(n \operatorname{mod} k) \text{ times}}, \lfloor n/k\rfloor, \cdots, \lfloor n/k\rfloor )$, and  $\bm{n}^\star\prec \bm{n}^{\star\star} \implies  p(\bm{n}^\star) = p(\bm{n}^{\star\star})$. We have $p(\bm{n})=p(\bm{n}')$ for all $\bm{n},\bm{n}'\in \mathcal{I}_n^k$.  Thus, EPPF $p$ only depends on $n$ and $k$ and hence can be written as $p(n_1,\ldots,n_k) \stackrel{let}{=} v(n,k)$. In other words, $p$ is a Gibbs partition with $V_{n,k}=v(n,k)$ and $W_s \equiv 1$. \citet{gnedin2006exchangeable} showed that \textit{any} infinitely exchangeable Gibbs partition with $\sigma<0$, including $\sigma=-\infty$, can be expressed as a mixture of Ewens-Pitman two parameter family $(\sigma, |\sigma| K)$ with some mixing distribution $q$ on the number of components $K\in\mathbb{N}$, thus the theorem follows.


\subsection{Proof of Theorem 4.1}
By \cref{thm:gibbs}, to prove the equivalency of balance-averseness and $\bm\mu \gelc \mathsf{Poisson}^+$, it is sufficient to prove that $\bm\mu \gelc \mathsf{Poisson}^+$ if and only if the corresponding $\bm{W}$ is log-convex. Rewrite the EPPF of ESC model as:
\begin{equation}
    p^{(n)}(n_1,\ldots,n_k|\bm{\mu}) = \frac{\mu_1^k}{\mathbb{P}(E_n|\bm{\mu})}\frac{k!}{n!}\prod_{j=1}^k n_j!\mu_{n_j}/\mu_1 = V_{n,k}\prod_{j=1}^kW_{n_j}
    \label{eq:esc}
\end{equation}
The log-convexity of $(W_s)_{s=1}^\infty$ is equivalent to the convexity of $(\log (s!\mu_s))_{s=1}^\infty$, or concavity of $(-\log (s!\mu_s)+a s+b)_{s=1}^\infty$ for arbitrary constants $a$ and $b$. The log-convexity of $(s!\mu_s)_{s=1}^\infty$, in other words $s\mu^2_s\le(s+1)\mu_{s-1}\mu_{s+1}$, implies $\mu_s>0$ for all $s=1,2,\ldots$ (full support), since assuming $\mu_s=0$ for some $s=s^*$ leads to $\mu_s=0$ for $s\le s^*$ that contradicts with $\mu_1>0$. By choosing $a=\log \lambda$ and $b=\log (e^{-\lambda}/(1-e^{-\lambda}))$ for some $\lambda>0$, we can see that concavity of $(-\log (s!\mu_s)+a s+b)_{s=1}^\infty$ is equivalent to $\bm\mu \gelc \mathsf{Poisson}^+(\lambda)$, by the definition of relative log-concavity and $\mu_s>0$ for all $s$ (full support).
Next, to show that $\bm\mu \lelc \mathsf{Poisson}^+$ if and only if the corresponding $\bm{W}$ is log-concave, we can apply the similar argument as above by flipping the inequalities. 
Now we prove the balance-neutral case, which amounts to showing that $\bm\mu = \mathsf{Poisson}^+$ is the only discrete probability measure such that $\bm{W} = (s!\mu_s/\mu_1)_{s=1}^\infty$ is both log-convex and log-concave. In other words, if $\log s! \mu_s$ is a linear sequence of $s$, then $\bm\mu = \mathsf{Poisson}^+$. Let $\log s! \mu_s = as + b$ for some constants $a$ and $b$, then $\mu_s = (e^{as}e^b)/s!$ for $s=1,2,\cdots$. Solving $\sum_{s=1}^\infty (e^{as}e^b)/s! = 1$ leads to $b=-\log (\exp(e^a)-1)$. Now substituting $\lambda = e^a>0$, we have $\mu_s  = \frac{\lambda^s}{s!}\frac{1}{e^\lambda-1}$, a probability mass function of $\mathsf{Poisson}^+(\lambda)$.

\section{Details on B-sequence}
\label{sec:appendixb}
\subsection{Slope Interpretation of B-sequence}
This subsection provides the derivation of \cref{eq:slope}. Let $\bm{n}=(n_1,\ldots,n_k),\bm{n}'=(n_1',\ldots,n_k')\in\mathcal{I}_n^k$ where $\bm{n}'$ is a $(**)-$cover of $\bm{n}$. Then there exists some $s\ge 2$ such that 
\[
\bm{n} = (n_1,\cdots n_u,\cdots, n_v,\cdots, n_k) \mapsto (n_1,\cdots, \underbrace{n_u-1}_{=s}, \cdots, \underbrace{n_v+1}_{=s}, \cdots, n_k) = \bm{n}'
\]
where $s=n_u'=n_u-1=n_v'=n_v+1$ with $u<v$. By \cref{eq:gibbspartition}, since $\bm{n}$ and $\bm{n}'$ both have $k$ clusters, 
\begin{align*}
    \log p^{(n)}(\bm{n}')-\log p^{(n)}(\bm{n})&= \log W_{n_u'}+\log W_{n_v'} - \log W_{n_u} - \log W_{n_v}= -\log W_{s+1} + 2\log W_s - \log W_{s-1}
\end{align*}
Also, for the Shannon index $H(\bm{n}) = -\sum_{j=1}^k(n_j/n)\log (n_j/n)$,
\begin{align}
    H(\bm{n}') - H(\bm{n}) &=\left(-\frac{n_u'}{n}\log \frac{n_u'}{n}-\frac{n_v'}{n}\log \frac{n_v'}{n}\right) - \left(-\frac{n_u}{n}\log \frac{n_u}{n}-\frac{n_v}{n}\log \frac{n_v}{n}\right)\\
    &= \frac{1}{n}\left[ (s+1)\log \frac{s+1}{n} - 2s\log \frac{s}{n} + (s-1)\log \frac{s-1}{n}\right] \label{eq:approxbefore}\\
    &\approx\footnotemark \frac{1}{n} \left[ \frac{d^2}{dx^2}\left(x\log\frac{x}{n}\right)\right]_{x=s}= \frac{1}{n}\frac{1}{s} \label{eq:approxafter}
\end{align}
\footnotetext{The approximation error is about $0.023/n$ for $s=2$, $0.007/n$ for $s=3$, $0.003/n$ for $s=4$, $0.001/n$ for $s=5$, and less than $0.001/n$ for $s\ge 6$.}

Therefore 
\begin{equation}
    \frac{\log p^{(n)}(\bm{n}')-\log p^{(n)}(\bm{n})}{H(\bm{n}') - H(\bm{n})} \approx n\times -s(\log W_{s+1}-2\log W_s + \log W_{s-1}) = nB_s(\bm{W})
\end{equation}
and see \cref{fig:balanceindexinterpret} for the grapical illustration.

\subsection{B-sequence and Balancedness}
First, we prove that the ordering based on the B-sequence comparison is equivalent to the log-concavity ordering. 

\begin{theorem}\label{thm:Bseq1}
$\bm{W}\lelc\bm{W}'$ if and only if $B_s(\bm{W})\ge B_s(\bm{W}')$ for all $s\ge 2$.
\end{theorem}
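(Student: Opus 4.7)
The plan is to show that $B_s(\bm{W})\ge B_s(\bm{W}')$ is nothing more than the defining concavity inequality of the ratio sequence $(\log(W_s/W_s'))$ multiplied through by the negative factor $-s$, so that both directions collapse into a single algebraic rearrangement.

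First I would reduce to the case where both sequences are strictly positive on the relevant index range. Since $\bm{W}\lelc\bm{W}'$ presupposes $\mathrm{supp}(\bm{W})\subseteq\mathrm{supp}(\bm{W}')$, I can restrict attention to $s\in\mathrm{supp}(\bm{W})$; on this set all ratios $W_s/W_s'$ are well-defined and positive, so the log-ratio sequence is finite. I would then recall that a sequence $(a_s)$ is concave on a discrete interval iff its second difference satisfies $a_{s+1}-2a_s+a_{s-1}\le 0$ at every interior $s$. Applied to $a_s=\log(W_s/W_s')$, this becomes
\begin{equation*}
(\log W_{s+1}-2\log W_s+\log W_{s-1})\;\le\;(\log W_{s+1}'-2\log W_s'+\log W_{s-1}').
\end{equation*}
Multiplying by $-s<0$ and invoking the definition \eqref{eq:balancedness} yields exactly $B_s(\bm{W})\ge B_s(\bm{W}')$. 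Since each step is reversible, this already furnishes the equivalence in the strictly positive case, for every $s\ge 2$.

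Next I would handle the support/zeros bookkeeping, which I expect to be the only genuinely delicate step. If $W_{s+1}=0$ for some $s$, then by convention $B_s(\bm{W})=+\infty$, and the desired inequality is automatic. Conversely, when $\bm{W}\lelc\bm{W}'$ and $W_{s+1}'=0$, the support containment forces $W_{s+1}=0$ as well, so $B_s(\bm{W}')=+\infty$ together with $B_s(\bm{W})=+\infty$. For the reverse direction, if the $B$-sequence inequality holds for all $s\ge 2$, I would argue that $W_{s+1}'=0$ combined with $W_{s+1}>0$ would force $B_s(\bm{W})<\infty=B_s(\bm{W}')$, contradicting the assumption; hence $\mathrm{supp}(\bm{W})\subseteq\mathrm{supp}(\bm{W}')$ is recovered, and restricting to this support the previous equivalence applies to deliver concavity of $(\log(W_s/W_s'))$.

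The main obstacle, modest as it is, lies in the boundary behavior at the endpoints of $\mathrm{supp}(\bm{W})$: concavity of a sequence is a condition on interior indices, and the B-sequence is indexed by $s\ge 2$, so I would verify that the indices match up appropriately and that no extra endpoint condition is needed beyond what is already encoded by $W_1=1$ (which anchors the start of both sequences and prevents internal zeros from the left). Once this is checked, the theorem follows. The upshot is that the argument is essentially a one-line reformulation, with the only nontriviality being the careful treatment of the convention $B_s=+\infty$ at zero positions.
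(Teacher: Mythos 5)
Your proposal is correct and follows essentially the same route as the paper's proof: the core step is the observation that $B_s(\bm{W})\ge B_s(\bm{W}')$ is the second-difference (concavity) inequality for $(\log(W_s/W_s'))$ multiplied by $-s$, and the support containment is recovered in both directions exactly as in the paper (from the definition of $\lelc$ in one direction, and from the convention $B_s=+\infty$ at zeros in the other). No substantive difference to report.
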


\begin{proof} 
Below, we prove the cases when $\bm{W}$ have a full support or a finite support, respectively. 
Assume that $\bm{W}$ have a full support; i.e. $W_s>0$ for all $s=1,2\ldots$. Then $\bm{W}\lelc\bm{W}'$ implies $\bm{W}'$ also have a full support by its definition. Note that $ B_s(\bm{W})\ge B_s(\bm{W}'), \forall s$ also implies $B_s(\bm{W}')<\infty$ for all $s=2,3,\ldots$, and thus $\bm{W}'$ have a full support. Therefore, either assuming $\bm{W}\lelc\bm{W}'$ or $B_s(\bm{W})\ge B_s(\bm{W}')$ for $s=2,3,\ldots,$ we have $\mathrm{supp}(\bm{W})\subseteq \mathrm{supp}(\bm{W}')$. Then,
\begin{align*}
    \bm{W}\lelc\bm{W}' \iff & (\log(W_s/W_{s}'))\text{ is concave}\\
    \iff & \left(\frac{W_s}{W_s'}\right)^2\ge \frac{W_{s+1}W_{s-1}}{W_{s+1}'W_{s-1}'}, \quad \forall s=2,3,\ldots\\
    \iff&\frac{W_{s+1}'W_{s-1}'}{(W_{s}')^2}\ge  \frac{W_{s+1}W_{s-1}}{(W_s)^2}, \quad \forall s=2,3,\ldots\\
    \iff& \log W_{s-1}'-2\log W_{s}'+ \log W_{s+1}' \ge \log W_{s-1}-2\log W_{s}+ \log W_{s+1}, \quad \forall s=2,3,\ldots\\
    \iff& B_s(\bm{W})\ge B_s(\bm{W}'), \quad \forall s=2,3,\ldots
\end{align*}
Next, assume that $\bm{W}$ has a finite support $\{1,2,\ldots,m\}$, which only happens when $\bm{W}$ is log-concave. Then by $\bm{W}\lelc \bm{W}'$, $\bm{W}'$ must have either a finite or a full support, and if it has a finite support $\{1,2,\ldots,m'\}$, then $m'\ge m$. On the other hand, $B_s(\bm{W})\ge B_s(\bm{W}')$ for $s=2,3,\ldots$ implies that the support of $\bm{W}$ must be contained in the support of $\bm{W}'$, since otherwise there is an $s$ such that $B_s(\bm{W}')=+\infty$ which contradicts with $B_s(\bm{W})<\infty$. Thus, either assuming $\bm{W}\lelc\bm{W}'$ or $B_s(\bm{W})\ge B_s(\bm{W}')$ for $s=2,3,\ldots,$ we have $\mathrm{supp}(\bm{W})\subseteq \mathrm{supp}(\bm{W}')$. Then,
\begin{align*}
 \bm{W}\lelc\bm{W}' \iff & (\log(W_s/W_{s}'))\text{ is concave in }\mathrm{supp}(\bm{W}) \\
    \iff & \left(\frac{W_s}{W_s'}\right)^2\ge \frac{W_{s+1}W_{s-1}}{W_{s+1}'W_{s-1}'}, \quad \forall s=2,3,\ldots,m-1\\
    \iff&\frac{W_{s+1}'W_{s-1}'}{(W_{s}')^2}\ge  \frac{W_{s+1}W_{s-1}}{(W_s)^2}, \quad \forall s=2,3,\ldots,m-1\\
    \iff& \log W_{s-1}'-2\log W_{s}'+ \log W_{s+1}' \ge \log W_{s-1}-2\log W_{s}+ \log W_{s+1}, \quad \forall s=2,3,\ldots,m-1\\
    \iff& B_s(\bm{W})\ge B_s(\bm{W}'), \quad \forall s=2,3,\ldots,m-1
\end{align*}
which proves the claim.
\end{proof}

By \cref{thm:Bseq1} and \cref{thm:gibbs}, we have the following corollary:
\begin{corollary} For any $n=1,2,\ldots,$ $\mathsf{Gibbs}_{[n]}(\bm{V},\bm{W})$ is 
\begin{itemize}
    \item balance-averse if and only if $B_s(\bm{W})\le 0$ for all $s\ge 2$
    \item balance-neutral if and only if $B_s(\bm{W})= 0$ for all $s\ge 2$ 
    \item balance-seeking if and only if $B_s(\bm{W})\ge 0$ for all $s\ge 2$
\end{itemize}
\end{corollary}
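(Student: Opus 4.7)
The plan is to recognize the corollary as a direct reformulation of Theorem~\ref{thm:gibbs} in the language of the B-sequence. Once the definition of $B_s(\bm{W})$ is unwound, the three bullets are pure sign manipulation; the only care needed is with the $+\infty$ convention and the no-internal-zeros clause from the definition of log-concavity.

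First I would unwind the definition: since the multiplier $s$ is strictly positive, $B_s(\bm{W}) \le 0$ is equivalent to
\[
\log W_{s-1} - 2\log W_s + \log W_{s+1} \ge 0,
\]
i.e.\ $W_s^2 \le W_{s-1} W_{s+1}$. Requiring this for every $s \ge 2$ is by definition the log-convexity of $\bm{W}$, which by Theorem~\ref{thm:gibbs} is equivalent to $\mathsf{Gibbs}_{[n]}(\bm{V},\bm{W})$ being balance-averse. The balance-seeking bullet is proved by the identical calculation with all inequalities reversed, yielding the log-concavity condition. The balance-neutral bullet is then immediate: a Gibbs partition is balance-neutral iff it is both balance-averse and balance-seeking, which by the previous two equivalences is iff $B_s(\bm{W}) \le 0$ and $B_s(\bm{W}) \ge 0$ simultaneously hold, i.e.\ $B_s(\bm{W}) = 0$ for every $s \ge 2$.

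The main, and still modest, obstacle is to reconcile the $+\infty$ convention ($B_s = +\infty$ whenever $W_{s+1} = 0$) with the no-internal-zeros requirement that is part of the log-concavity definition. For the balance-averse case this is vacuous: $W_1 = 1$ together with the recursive inequality $W_s^2 \le W_{s-1} W_{s+1}$ propagates positivity forward, so $W_s > 0$ for every $s$ and the convention is never triggered. For the balance-seeking case I would argue as in the proof of Theorem~\ref{thm:gibbs}: if $B_s(\bm{W}) \ge 0$ for all $s \ge 2$ and $W_{s+1} = 0$ for the first time at some index, then $B_s = +\infty \ge 0$ is satisfied trivially while $\bm{W}$ is forced to remain zero thereafter (an internal zero followed by a nonzero entry is excluded because it would require $W_s^2 \ge W_{s-1} \cdot 0$ to then become strict, contradicting the recursion reversed). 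On the positive support, $B_s \ge 0$ reduces to the desired $W_s^2 \ge W_{s-1} W_{s+1}$. With this convention check in place, Theorem~\ref{thm:gibbs} applies verbatim and delivers the corollary.
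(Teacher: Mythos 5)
Your proposal is correct and matches the paper's own argument: the corollary is obtained by unwinding the sign of $B_s(\bm{W})$ into the discrete second-difference inequality $W_s^2 \lessgtr W_{s-1}W_{s+1}$ and invoking Theorem~\ref{thm:gibbs}, with the balance-neutral case as the intersection of the other two. Your handling of the $+\infty$ convention and the no-internal-zeros clause (positivity propagates forward under log-convexity; a zero forces all subsequent terms to vanish under log-concavity) is exactly the bookkeeping the paper performs.
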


To see this, $B_s(\bm{W}) \equiv 0$ if and only if $\bm{W}$ has a full support and $W_s^2=W_{s+1}W_{s-1}$ for all $s\geq 2$, which is equivalent to $(W_s)$ being log-linear (both log-convex and log-concave). Similar argument follows for the balance-averse and the balance-seeking cases. 


\section{Details on ESC Model and Posterior Inference}
\label{sec:appendixc}
\subsection{Examples of ESC Model with Different Balancedness}

In this section, we provide details of ESC models with different balancedness properties. We consider 5 different cases, where $\bm\mu=(\mu_s)_{s=1}^\infty$ is a probability mass function of 1) shifted binomial, 2) zero-truncated binomial, 3) zero-truncated Poisson, 4) zero-truncated (extended) negative binomial, and 5) logarithmic distribution. We assume the number of trials $N$ is fixed in zero-truncated binomial, while shifted binomial is not. Recall that EPPF of ESC model with fixed $\bm\mu$ and $\mu_1>0$ is written as follows~\citep{betancourt2020random}:
\begin{equation}
    p^{(n)}(n_1,\ldots,n_k|\bm{\mu}) = \frac{1}{\mathbb{P}(E_n|\bm{\mu})}\frac{k!}{n!}\prod_{j=1}^k n_j!\mu_{n_j} 
\end{equation}
where the event $E_n$ is $E_n = \left\{\text{there exists }k\in\mathbb{N}\text{ such that }\sum_{j=1}^k n_j = n\right\}$.  We can calculate the normalizing constant $\mathbb{P}(E_n|\bm{\mu})$ with the $k$-fold convolution of random variables following the distribution $\bm\mu$. That is, if $X_1,\ldots,X_k\stackrel{iid}{\sim}\bm\mu$ and $S_k := X_1+\cdots+X_k$, then $\mathbb{P}(E_n|\bm\mu) = \sum_{k=1}^n\mathbb{P}(S_k = n)$, which is always positive if $\mu_1>0$. We provide five examples of $\bm\mu$ in the ESC model in Table \ref{table:tableofab1class}, with their complete form of EPPFs including the normalizing constant. For the derivation of $k$-fold convolution formulas, we refer the readers to \citet{ahuja1970distribution,tate1958minimum,ahuja1971distribution,patil1965certain}.
Regarding the notations, $S_1(n,k)$ and $S_2(n,k)$ are the Stirling number of the first and second kind respectively, and the binomial coefficient with real argument is defined as $\binom{a}{b} = \frac{a(a-1)\cdots(a-b+1)}{b!}$ for $a\in \mathbb{R}$ and $b\in\mathbb{N}$. When $r=1$ in $\mathsf{NegBin}^+(r,p)$, it becomes $\mathsf{Geometric}(p)$ distribution and the normalizing constant $\mathbb{P}(E_n|\bm\mu)$ is simply $p$. We note that $\mathsf{NegBin}^+(r,p)$ with $r>-1, r\neq 0$ is also called Engen’s extended negative binomial distribution since it admits negative $r$ values.

\begin{table}[h]
\small
\caption{Examples of $\bm\mu$ in the ESC model. ESC models with shifted binomial and zero-truncated binomial are balance-seeking, zero-truncated Poisson is balance-neutral, and zero-truncated negative binomial and logarithmic are balance-averse.}
\label{table:tableofab1class}
\vskip 0.1in
\renewcommand{\arraystretch}{1.2}
\begin{tabular}{c c c c}
\toprule
$\bm\mu$ & Support  & $\mu_{s}$ & $\mathbb{P}(E_n|\bm\mu)$\\ \hline 
$\begin{array}{c}1+\mathsf{Bin}(N,p);\\ N \in \mathbb{N}\\ p\in(0,1)\end{array}$ & $\{1,\!\cdots\!,N\!+\!1\}$ &   $\displaystyle \frac{N!p^{s-1}(1-p)^{N-s+1}}{(s-1)!(N-s+1)!}$ & $\displaystyle \sum_{k=\lceil n/(N+1)\rceil}^n  \binom{Nk}{n-k}p^{n-k}(1-p)^{Nk-n+k}
$\\ \hline
$\begin{array}{c}\mathsf{Bin}^+(N,p);\\ N\in\mathbb{N} \text{ (fixed)}\\ p\in(0,1)\end{array}$ & $\{1,\cdots,N\}$ &  $\displaystyle \binom{N}{s}\frac{p^s(1-p)^{N-s}}{1-(1-p)^N}$ & $\displaystyle \sum_{k=\lceil n/N \rceil}^n\sum_{i=1}^{k} \left\{\binom{k}{i}\binom{Ni}{n}\frac{(-1)^{k-i}p^n(1-p)^{Nk-n}}{(1-(1-p)^N)^k}\right\}
$\\ \hline
$\begin{array}{c}\mathsf{Poisson}^+(\lambda);\\ \lambda>0\end{array}$ & $\mathbb{N}$& $\displaystyle \frac{\lambda^se^{-\lambda}}{(1-e^{-\lambda})s!}$ & $\displaystyle\sum_{k=1}^n\frac{k!\lambda^n}{(e^\lambda -1)^k n!}S_2(n,k)
$\\ \hline 
$\begin{array}{c}\mathsf{NegBin}^+(r,p);\\ r>-1, r\neq 0\\ p\in(0,1)\end{array}$& $\mathbb{N}$&  $\displaystyle  \binom{s+r-1}{s}\frac{(1-p)^rp^s}{1-(1-p)^r}$ & $\displaystyle \sum_{k=1}^n \sum_{i=1}^k\left\{ \binom{k}{i}\binom{n+ri-1}{n}\frac{(-1)^{k-i}p^n}{((1-p)^{-r}-1)^k}\right\}
$\\ \hline
$\begin{array}{l}\mathsf{Logarithmic}(p);\\ p\in(0,1)\end{array}$ & $\mathbb{N}$&  $\displaystyle  \frac{-1}{\log(1-p)}\frac{p^s}{s}$ & $\displaystyle \sum_{k=1}^n \frac{k!p^n}{n!(-\log(1-p))^k}|S_1(n,k)|$\\ 
\bottomrule
\end{tabular}
\end{table}

\subsection{Posterior Inference Algorithms}

We provide posterior inference algorithms for the ESC models presented in the previous subsection. Let $\bm{x}$ be data, $\theta_\mu$ be the parameter(s) of $\bm\mu$, $\bm\beta=(\beta_\ell)$ be the distortion probabilities, and $\mathbf{z} = (z_1,\ldots,z_n)$ be the cluster membership vector. Categorical distribution probabilities $\bm\theta_\ell$ are fixed and assumed to be the empirical distribution of the data, following \citet{steorts2015entity}. Each iteration of MCMC sampler consists of three steps: first update $\Pi_n | \theta_\mu, \bm\beta,\bm{x}$, next update $\theta_\mu|\Pi_n, \bm{x}$, and then update $\bm\beta | \Pi_n, \theta_\mu, \bm{x}$. We provide details of the first two which are affected by the modification of random partition prior; the likelihood function and updating scheme of $\bm\beta$ are similar to \citet{steorts2015entity,betancourt2020random} and hence omitted.

1. Update $\Pi_n | \theta_\mu, \bm\beta, \bm{x}$ using the following full conditional distribution:
    \begin{equation*}
    \mathbb{P}(z_{i}=j | -) \propto p(\bm{x}|\mathbf{z}_{-i}, z_{i}=j, \bm\beta) \mathbb{P}(z_i=j|\mathbf{z}_{-i}, \theta_\mu),
    \end{equation*}
    where the prior reallocation probability $\mathbb{P}(z_i=j|\mathbf{z}_{-i}, \theta_\mu)$ is
\begin{itemize}
    \item (Shifted binomial) \quad $\mathbb{P}(z_i=j|\mathbf{z}_{-i}, N,p)\propto\begin{cases}
    \frac{n_j^{-i}+1}{n_j^{-i}}(N-n_j^{-i}+1) &\text { if } j=1, \ldots, k^{-i}, \\
    \left(k^{-i}+1\right) (1-p)^{N+1}/p & \text { if } j=k^{-i}+1
    \end{cases}$
    \item (Zero-truncated binomial) \quad $\mathbb{P}(z_i=j|\mathbf{z}_{-i}, N,p)\propto \begin{cases}
    (N-n_j^{-i}) & \text { if } j=1, \ldots, k^{-i}, \\
\left(k^{-i}+1\right) \frac{N(1-p)^N}{1-(1-p)^N} & \text { if } j=k^{-i}+1
\end{cases}$
    \item (Zero-truncated Poisson) \quad $ \mathbb{P}(z_i=j|\mathbf{z}_{-i}, \lambda) \propto \begin{cases}
    1 & \text { if } j=1, \ldots, k^{-i}, \\
\left(k^{-i}+1\right) \frac{e^{-\lambda}}{1-e^{-\lambda}} & \text { if } j=k^{-i}+1
\end{cases}$
    \item (Zero-truncated negative binomial) \quad  $\mathbb{P}(z_i=j|\mathbf{z}_{-i}, r,p)\propto \begin{cases}
    n_j^{-i}+r & \text { if } j=1, \ldots, k^{-i}, \\
    \left(k^{-i}+1\right) \frac{r(1-p)^r}{1-(1-p)^r} & \text { if } j=k^{-i}+1
    \end{cases}$
\item (Logarithmic) \quad $ \mathbb{P}(z_i=j|\mathbf{z}_{-i}, p)\propto \begin{cases}
     n_j^{-i} & \text { if } j=1, \ldots, k^{-i}, \\
     \left(k^{-i}+1\right)\frac{-1}{\log(1-p)}  & \text { if } j=k^{-i}+1
    \end{cases}$
\end{itemize}
where $k^{-i}$ and $n_j^{-i}$ are the number and sizes of clusters in $\mathbf{z}_{-i}$. Notice that in the last four examples, the reallocation probability to an existing cluster is a linear function of its size, and these are the only possible cases of $\bm\mu$ that have such a property \citep{johnson2005univariate}. These are called Sundt and Jewell family \citep{sundt1981further} or (a,b,1) zero-truncated family \citep{klugman2012loss} in actuarial applications.

2. Update $\theta_\mu|\Pi_n,\bm{x}$, for example using slice sampler~\citep{neal2003slice}, 
\begin{itemize}
    \item (Shifted binomial) Here $\theta_\mu = (N,p)$ and let prior $\pi(N,p)\propto p^{-0.5}(1-p)^{-0.5}/N$ following \citet{berger2012objective}. Then unlike the zero-truncated binomial, we can use the following joint distribution sampler: first update $N$ from $[N|\Pi_n,\bm{x}]$,
    \begin{equation*}
      [N|\Pi_n,\bm{x}] \propto  \frac{\mathcal{B}(n-k+\frac{1}{2},Nk-n+k+\frac{1}{2})}{N}\prod_{j=1}^k\frac{n_j N!}{(N-n_j+1)!}  , \quad N=\max n_j - 1, \max n_j, \cdots
    \end{equation*}
    where $\mathcal{B}$ is a beta function. Given this $N$, update $p|N,\Pi_n,\bm{x}\sim \mathsf{Beta}(n-k+\frac{1}{2}, Nk-n+k+\frac{1}{2})$.
    \item (Zero-truncated binomial with fixed $N$) Here $\theta_\mu=p$ and let prior $p\sim \mathsf{Beta}(a_p,b_p)$. Then sample $p$ from 
    \begin{equation}
      [p|\Pi_n,\bm{x}] \propto  p^{a_p-1}(1-p)^{b_p-1}\frac{p^n(1-p)^{Nk-n}}{(1-(1-p)^N)^k}
    \end{equation}
    \item (Zero-truncated Poisson) Here $\theta_\mu=\lambda$ and let prior $\lambda\sim \mathsf{Gamma}(a_\lambda,b_\lambda)$. Then sample $\lambda$ from 
    \begin{equation}
      [\lambda|\Pi_n,\bm{x}] \propto  \lambda^{a_\lambda-1}\exp(-b_\lambda\lambda)  \lambda^n \left(\frac{e^{-\lambda}}{1-e^{-\lambda}}\right)^k
    \end{equation}
    \item (Zero-truncated negative binomial) Here $\theta_\mu=(r,p)$ and let prior $(r,p)\sim \pi(r,p)$. Then sample $(r,p)$ from 
    \begin{equation}
      [r,p|\Pi_n,\bm{x}] \propto \pi(r,p) p^n\left(\frac{(1-p)^r}{1-(1-p)^r}\right)^k \prod_{j=1}^k[(n_j+r-1)\cdots (r)]  
    \end{equation}
        \item (Logarithmic) Here $\theta_\mu=p$ and let prior $p\sim \mathsf{Beta}(a_p,b_p)$. Then sample $p$ from 
    \begin{equation}
      [p|\Pi_n,\bm{x}] \propto  p^{a_p-1}(1-p)^{b_p-1}p^n\left(\frac{-1}{\log(1-p)}\right)^k
    \end{equation}
\end{itemize}

The software for the described posterior inference algorithms will be available in R package $\mathtt{microclustr}$ \citep{microclustr}.

\section{Details on Real and Synthetic Data Studies}
\label{sec:appendixd}

\subsection{SIPP1000 Data Analysis Details}

In this section, we provide more details on SIPP1000 data analysis described in \ref{sec:4.3}. SIPP1000 dataset has $n=4116$ records, and it contains $5$ categorical fields (sex, birth year, birth month, race, and state of residence) of $K^+=1000$ individuals. We fit ESC models with four different choices of $\bm\mu$: i) $\mathsf{Binomial}^+$ with fixed number of trials $N=5$ (balance-seeking), ii) $\mathsf{Poisson}^+$ (balance-neutral), iii) $\mathsf{NegativeBinomial}^+$ (balance-averse), and iv) $\mathsf{Dirichlet}$ \citep[neither balance-seeking nor averse;][]{betancourt2020random}. We fix the number of trials as $N=5$ in the binomial case since dataset is constructed from five waves of longitudinal study, so there is no cluster with size greater than 5 a priori. The hyperprior specifications are i) $p\sim \mathsf{Beta}(0.5,0.5)$ for the binomial success probability, ii) $\lambda\sim \mathsf{Gamma}(1,1)$ for $\mathsf{Poisson}^+$, iii) $(r,p)\sim \mathsf{Gamma}(1,1)\times\mathsf{Beta}(2,2)$ for $\mathsf{NegBin}^+$, and iv) $\alpha=1$, $\bm\mu^{(0)}|r,p=\mathsf{NegBin}^+(r,p)$, and $(r,p)\sim \mathsf{Gamma}(1,1)\times\mathsf{Beta}(2,2)$ for $\mathsf{Dirichlet}$ (see \citet{betancourt2020random} for details). Prior for distortion parameters $(\beta_\ell)$ are assumed to be independent beta distributions with mean 0.005 and standard deviation 0.01. 

We collect 15000 posterior samples after 5000 burn-in iterations, where we update cluster indicators $(z_i)$ for each individual within each one (global) MCMC iteration. All computations were performed on an Intel E5-2690 v3 CPU with 128GB of memory. 
In \cref{fig:sippmcmck} and \cref{fig:sippmcmcparameters}, present trace plots for the number of entities $K^+$ as well as parameters $\theta_\mu$, where four chains all become stationary after 5000 burn-in iterations.

\begin{figure}[h]
    \centering
    \includegraphics[width=0.99\textwidth]{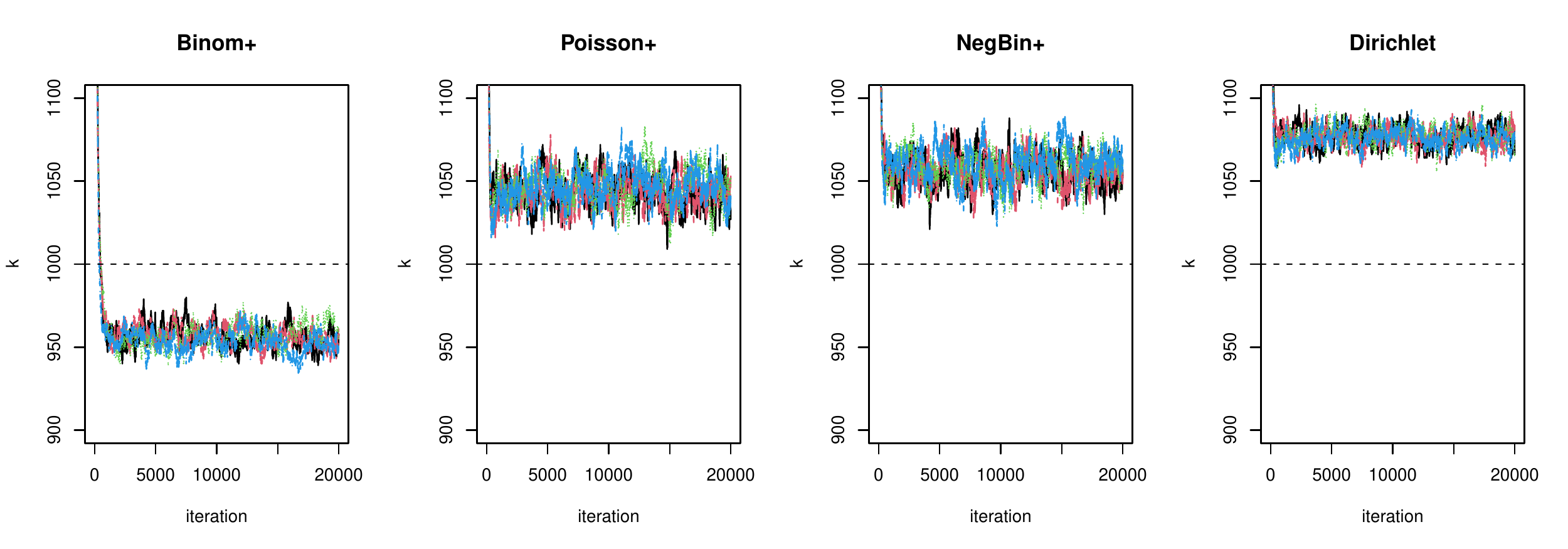}
    \caption{SIPP1000 dataset; trace plots of the number of clusters (entities) $K^+$ where true is 1000. }
    \label{fig:sippmcmck}
\end{figure}

\begin{figure}[h]
    \centering
    \includegraphics[width=0.99\textwidth]{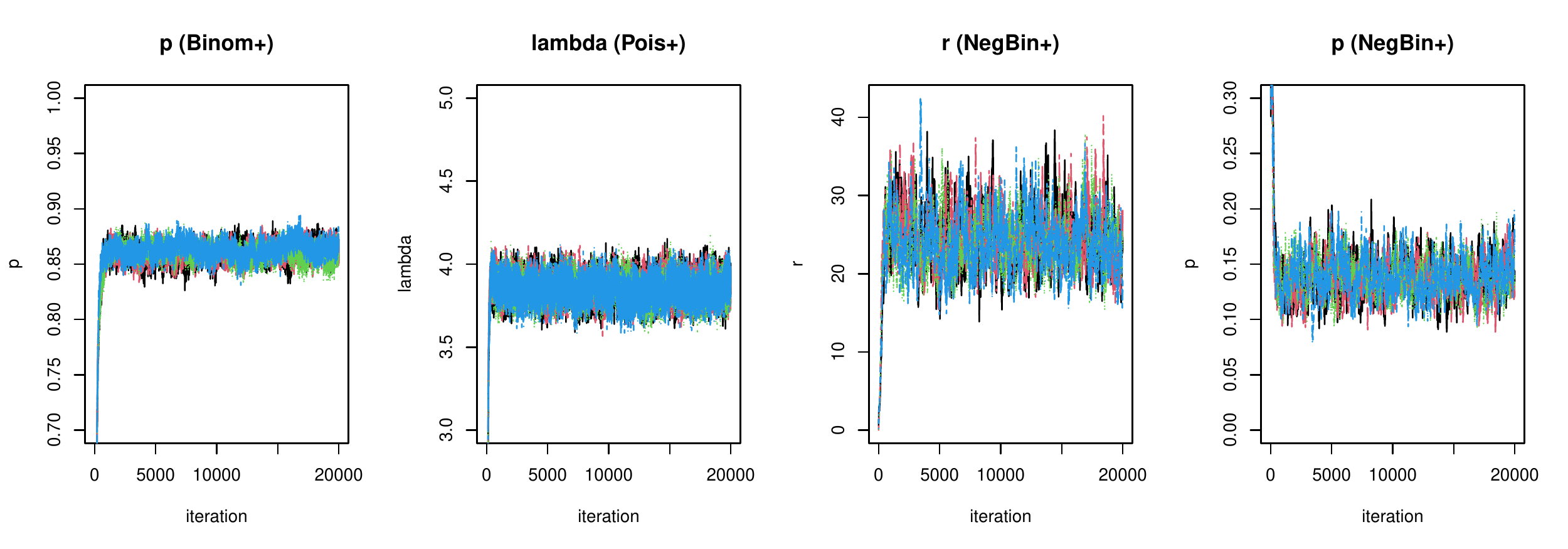}
    \caption{SIPP1000 dataset; trace plots of random partition model-specific parameters $\theta_\mu$. }
    \label{fig:sippmcmcparameters}
\end{figure}

In addition to \cref{table:SIPP}, we also report the summary of posterior distributions of distortion probabilities $(\beta_\ell)$ of SIPP1000 dataset. The distortion probability of the month of birth, $\beta_3$, is the highest for all models. Balance-seeking model ($\mathsf{Binom}^+$) provides higher estimated distortion probabilities compared to other models, even though the prior for $(\beta_\ell)$ are all the same. This is because the balance-seeking model avoids the generation of singleton clusters and prefers to treat the difference as `distorted entry'. This leads to the reduced false negative rate (FNR) by reducing the number of missed pairs (i.e. record pairs that are linked under the truth but not linked under the estimate), but at the cost of increased FDR.

\begin{table}[th]
\caption{SIPP1000 dataset; Summary of posterior distributions (mean, standard deviation) of  distortion probabilities $(\beta_l)$ for different $\bm\mu$. $\beta_1,\ldots,\beta_5$ correspond to distortion probabilities of sex, year of birth, month of birth, race, and state of residence, respectively.}
\label{table:SIPPbeta}
\vskip 0.1in
\begin{center}
\begin{small}
\begin{tabular}{c cccccccccc}
\toprule
$\bm{\mu}$ (balance-) & $\mathbb{E}(\beta_1|\bm{x})$& SD & $\mathbb{E}(\beta_2|\bm{x})$& SD & $\mathbb{E}(\beta_3|\bm{x})$& SD & $\mathbb{E}(\beta_4|\bm{x})$& SD & $\mathbb{E}(\beta_5|\bm{x})$& SD  \\
\hline
$\mathsf{Binom}^+$ (seeking) & 0.0012 & 0.0014 & 0.0151 & 0.0027 & 0.0492 & 0.0039 & 0.0074 & 0.0029 & 0.0140 & 0.0025\\
$\mathsf{Poisson}^+$ (neutral) & 0.0003 & 0.0005 & 0.0032 & 0.0018& 0.0401 & 0.0035 & 0.0008 & 0.0012 & 0.0039 & 0.0019\\
$\mathsf{NegBin^+}$ (averse) & 0.0003 & 0.0006 & 0.0026 & 0.0019 & 0.0391 & 0.0036 & 0.0007 & 0.0010 & 0.0025 & 0.0019\\
$\mathsf{Dirichlet}$ (n/a) & 0.0002 & 0.0004 & 0.0002 & 0.0004 & 0.0377 & 0.0034 & 0.0016 & 0.0016 & 0.0009 & 0.0008\\ %
\hline
\end{tabular}
\end{small}
\end{center}
\vskip -0.1in
\end{table}

\newpage 

\subsection{Simulation Studies}

We conduct additional simulations to study the effect of balancedness on entity resolution tasks. We consider three different scenarios of cluster size distribution: 1) binomial, 2) Poisson, and 3) negative binomial, where \cref{table:simulationsettings} shows the cluster size distributions for each scenario. We simulate data with the generating scheme described in \cref{sec:4.1}, where $L=5$, $D_\ell = 10$, $\bm\theta_\ell = (0.1,\ldots,0.1)$, and three different choices of distortion probabilities $\beta\in\{0.01, 0.05, 0.1\}$. 

\begin{table}[th]
\caption{The true number of clusters $m_s=\sum_{j}1(n_j=s)$ of each size $s=1,\ldots,14$ for three different scenarios. Each entries correspond to 100 times the probability mass of the distribution shown in the first column, rounded to the nearest integer. }
\label{table:simulationsettings}
\vskip 0.1in
\begin{center}
\begin{small}
\begin{tabular}{c ccccc ccccc cccc cc}
\hline
Scenario & $m_1$ & $m_2$ & $m_3$ & $m_4$ & $m_5$ & $m_6$& $m_7$& $m_8$& $m_9$& $m_{10}$& $m_{11}$& $m_{12}$ & $m_{13}$& $m_{14}$ & $K$\\
\hline
1. $\mathsf{Binom}^+(10,0.5)$ & 1 & 4 & 12 & 21 & 25 & 21 & 12 & 4 & 1 & 0 & 0 & 0 & 0 & 0 & 101\\
2. $\mathsf{Poisson}^+(5)$ & 3 & 8 & 14 & 18 & 18 & 15 & 11 & 7 & 4 & 2 & 1 & 0 & 0 & 0 & 101 \\
3. $\mathsf{NegBin^+}(5,0.5)$ & 8 & 12 & 14 & 14 & 13 & 11 & 8 & 6 & 5 & 3 & 2 & 1 & 1 & 1 & 99 \\
\hline
\end{tabular}
\end{small}
\end{center}
\vskip -0.1in
\end{table}

We fit the ESC model with four different $\bm\mu$: i) $1+\mathsf{Binom}$ (shifted binomial), ii) $\mathsf{Poisson}^+$, iii) $\mathsf{NegBin}^+$, and iv) $\mathsf{Dirichlet}$. We use shifted binomial instead of zero-truncated binomial, assuming the situation when the maximum size of clusters is not known a priori. For shifted binomial distribution, we put $p(N,p)\propto 1/N\times p^{-0.5}(1-p)^{-0.5}$ prior on $(N,p)$ following \citet{berger2012objective}, and other hyperparameter settings for other three models and MCMC specification details are same as the previous subsection in the SIPP1000 data analysis. Here distortion probabilities $(\beta_\ell)$ are assumed to be unknown.

\begin{table}[!h]
\caption{Simulation study results for three different scenarios and three different true distortion probabilities. Presented are the posterior mean and standard deviation of the number of clusters, and the point estimates of FNRs and FDRs in \% by \citet{dahl2006model}'s method.}
\label{table:simsetting123}
\vskip 0.1in
\begin{center}
\begin{small}
\begin{tabular}{c|c|ccc|ccc|ccc}
\toprule
& & \multicolumn{3}{c|}{Scenario 1 (true $K^+=101$)} & \multicolumn{3}{c|}{Scenario 2 (true $K^+=101$) } & \multicolumn{3}{c}{Scenario 3 (true $K^+=99$)}\\
$\beta$ & $\bm{\mu}$ (balance-) & $\mathbb{E}(K^+|\bm{x})$ (SD) & FNR & FDR & $\mathbb{E}(K^+|\bm{x})$ (SD) & FNR & FDR & $\mathbb{E}(K^+|\bm{x})$ (SD) & FNR & FDR \\
\midrule
\multirow{4}{*}{0.01}& $1+\mathsf{Binom}$ (seeking) & 100.8 (0.6) & \textbf{0.0} & \textbf{0.0} &  100.6 (1.1) & \textbf{0.9} & 0.8 & 98.9 (1.3) & 0.6 & 0.1 \\
& $\mathsf{Poisson}^+$ (neutral) & 101.3 (0.7) & 0.4 & 0.5 & 101.5 (1.1) & 1.1 & \textbf{0.6} & 100.1 (1.1) & 0.6 & 0.1 \\
& $\mathsf{NegBin^+}$ (averse)  & 101.6 (0.9) & 0.4 & 0.5 & 102.2 (1.2) & 1.1 & \textbf{0.6} & 101.3 (1.4) & 0.6 & 0.1  \\
& $\mathsf{Dirichlet}$ (n/a) & 101.7 (0.9) & 0.7 & \textbf{0.0} & 102.1 (1.5) & 1.5 & 0.8 & 103.6 (2.0) & 0.6 & 0.1\\ 
\midrule
\multirow{4}{*}{0.05}& $1+\mathsf{Binom}$ (seeking) &  99.7 (1.2) & \textbf{2.9} & 3.2 & 100.0 (1.5) & \textbf{2.4} & 2.9 & 95.2 (1.5) & \textbf{3.0} & 3.5\\
& $\mathsf{Poisson}^+$ (neutral) & 103.8 (2.1) & 4.2 & \textbf{2.4} & 102.4 (2.0) & 2.9 & 2.6 & 96.4 (1.7) & 3.6 & 4.4 \\
& $\mathsf{NegBin^+}$ (averse)  & 106.4 (2.2) & 6.7 & 3.6 & 105.2 (2.4) & 3.2 & 2.6 & 99.4 (2.3) & 3.1 & \textbf{2.8} \\
& $\mathsf{Dirichlet}$ (n/a) & 102.1 (2.6) & 4.8 & 4.1 & 103.6 (3.3) & 3.8 & \textbf{2.0} & 100.6 (3.0) & 4.9 & 4.0\\ 
\midrule
\multirow{4}{*}{0.1}& $1+\mathsf{Binom}$ (seeking) & 105.2 (1.9) & \textbf{8.1} & 4.1 & 98.5 (2.2) & \textbf{7.0} & 9.4  & 94.0 (2.2) & 8.1 & 7.5\\
& $\mathsf{Poisson}^+$ (neutral) & 111.1 (2.9) & 9.2 & 4.1 & 102.9 (2.9) & 8.9 & 10.1 & 98.7 (3.0) & \textbf{7.3} & 8.1\\
& $\mathsf{NegBin^+}$ (averse)  & 115.6 (3.6) & 10.5 & \textbf{3.9} & 109.5 (4.0) & 10.7 & 8.8 & 109.1 (4.3) & 10.9 & \textbf{5.4} \\
& $\mathsf{Dirichlet}$ (n/a) & 109.9 (5.0) & 8.8 & 4.1 & 113.9 (5.5) & 12.9 & \textbf{6.5} & 112.0 (5.0) & 12.7 & 6.2\\ 
\bottomrule
\end{tabular}
\end{small}
\end{center}
\vskip -0.1in
\end{table}

\cref{table:simsetting123} shows that the balance-seeking random partition model generally achieves a lower FNR compared to others, even under Scenarios 2 and 3 where the model is misspecified. \cref{fig:figsim123} shows the posterior distributions of $m_i$, the number of clusters of size $i$. It can be clearly seen that the balance-seeking model regularizes the generation of singleton clusters, suggesting its usefulness for ER tasks when a single record per individual is less likely to happen or when false negative control is more important.

\begin{figure}[!ht]
    \centering
    \includegraphics[height=1.05\textwidth,width=0.93\textwidth]{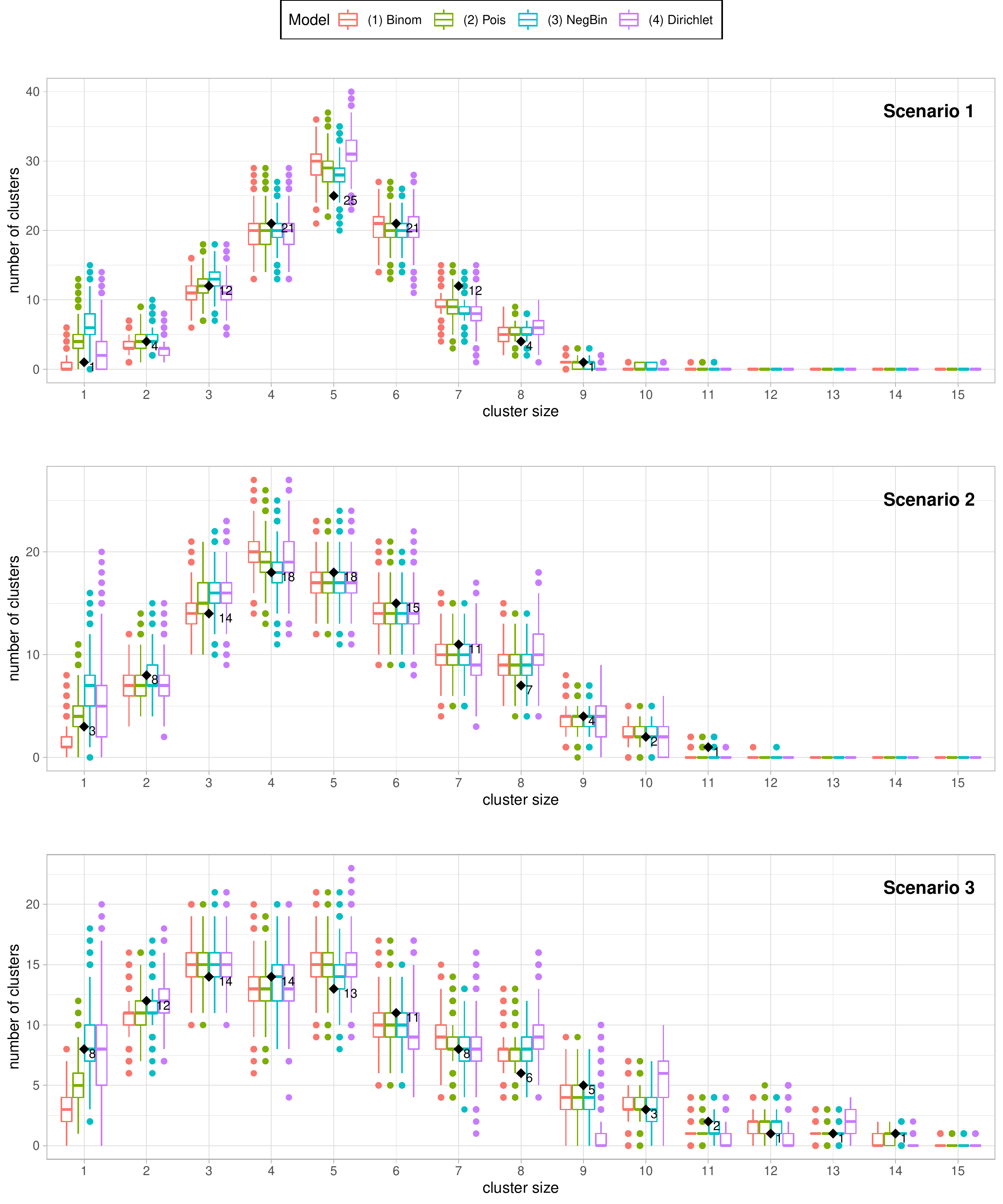}
    \caption{Posterior boxplots of the number of clusters of size $i$ for each models, $m_i$. (Top) Scenario 1 with $\beta_{true}=0.05$. (Middle) Scenario 2 with $\beta_{true}=0.05$. (Bottom) Scenario 3 with $\beta_{true}=0.05$. True $m_i$ are shown as black diamonds and annotated by their values.}
    \label{fig:figsim123}
\end{figure}


\end{document}